\newtheorem{proposition}{Proposition}
\theoremstyle{definition}
\newtheorem{remark}{Remark}
\newcommand\ALPHABET{\mathcal}
\newcommand\EXPLORE{\pi_{\text{expl}}}
\newcommand\PR{\mathds{P}}
\newcommand\EXP{\mathds{E}}
\newcommand\IND{\mathds{1}}
\newcommand\reals{\mathds{R}}
\newcommand\red[1]{{\color{red}#1}}
\newtheorem{definition}{Definition}
\newtheorem{theorem}{Theorem}
\newtheorem{lemma}{Lemma}
\DeclareMathOperator\TV{TV}
\DeclareMathOperator\Was{Was}
\DeclareMathOperator\MMD{MMD}
\DeclareMathOperator\SPAN{span}
\DeclareMathOperator\LIP{Lip}
\newcommand*\F{\mathfrak{F}}
\title{Approximate information state based convergence analysis of recurrent Q-learning}
\author{Erfan Seyedsalehi \\
McGill University\\
\texttt{seyederfan.seyedsalehi@mail.mcgill.ca} 
\And
Nima Akbarzadeh \\
McGill University \\
\texttt{nima.akbarzadeh@mail.mcgill.ca} 
\And 
Amit Sinha \\
McGill University \\
\texttt{amit.sinha@mail.mcgill.ca}
\And 
Aditya Mahajan \\
McGill University \\
\texttt{aditya.mahajan@mcgill.ca} 
}
\begin{document}

\maketitle

\begin{abstract}
In spite of the large literature on reinforcement learning (RL) algorithms for partially observable Markov decision processes (POMDPs), a complete theoretical understanding is still lacking. In a partially observable setting, the history of data available to the agent increases over time so most practical algorithms either truncate the history to a finite window or compress it using a recurrent neural network leading to an agent state that is non-Markovian. In this paper, it is shown that in spite of the lack of the Markov property, recurrent Q-learning (RQL) converges in the tabular setting. Moreover, it is shown that the quality of the converged limit depends on the quality of the representation which is quantified in terms of what is known as an approximate information state (AIS). Based on this characterization of the approximation error, a variant of RQL with AIS losses is presented. This variant performs better than a strong baseline for RQL that does not use AIS losses. It is demonstrated that there is a strong correlation between the performance of RQL over time and the loss associated with the AIS representation.
\end{abstract}

\section{Introduction}

In recent years, Reinforcement Learning (RL) has witnessed many successes such as achieving human-level performance in Go \citep{silver2016mastering}, learning to play Atari \citep{mnih2013atari,Mnih2015atari}, as well as solving many control problems arising in engineering and robotics \citep{schulman2015trust,Schulman2017ppo,haarnoja2018sac,tassa2018deepmind}. These successes are achieved by algorithms with strong theoretical basis~\citep{SuttonBarto2018,BertsekasTsitsiklis1996}. However, RL theory, for the most part, is limited to models with full state information.

In various applications such as finance, healthcare, and robotics, the agent does not observe the full state of the environment. Such partially observed systems are mathematically modeled as partially observable Markov decision processes (POMDPs). When the system model is known, POMDPs can be viewed as MDPs by considering the belief state (i.e., the posterior distribution of the partially observed environmental state) as an information state~\citep{Astrom1965}. Furthermore, there are various efficient algorithms to compute approximately optimal planning solutions \citep{ShaniPineauKaplow2013}.

However, it is not possible to generalize these planning results to develop learning algorithms because constructing the belief state requires knowledge of the model. So, an agent operating in an unknown partially observed environment cannot construct a belief state based on its observation. Two approaches are commonly used in the literature to circumvent this conceptual difficulty (i)~use a finite window of observations (rather than the full history) and (ii)~use a recursively updateable (or recurrent) \emph{agent state}. A key difficulty in analyzing these learning algorithms is that the state of the agent may evolve in a non-Markovian manner. Furthermore, for the recurrent algorithm, the representation mapping histories to agent states needs to be learnt in parallel, which is especially difficult in sparse reward environments. So, even though there is a rich and large literature on RL theory for POMDPs~\citep[see][and follow-up liteature]{lin1992memory,Littman1994a,singh1994learning,jaakkola1994reinforcement, mccallum1996reinforcement,Whitehead1995,Loch1998,perkins2002existence}, much of the literature either analyzes the case where the agent does not have a memory, or only provides empirical evidence but does not include a detailed convergence or approximation analysis. 

In this paper, we investigate one of the most popular RL algorithms for POMDPs: Recurrent Q-learning (RQL), which uses a recurrent neural network (RNN) for approximating a history-based Q-function. RQL was initially proposed by \cite{schmidhuber1991reinforcement, harp1992recurrent} with a substantial follow up literature \citep{bakker2002reinforcement,Wierstra2007,Wierstra2010,daswani2013q}. There is growing empirical evidence suggesting that variants of RQL work well in practice \citep{hausknecht2015recqlearning,kapturowski2018recurrent,Foerster2016,OU2021300,mousavi2017learning,sorokin2015deep}. However, a detailed theoretical understanding of the algorithm is lacking. 

\paragraph{Review of theoretical papers analyzing variants of Q-learning for POMDPs}
There are a few recent papers which analyze closely related problems.  A general framework of approximation for POMDPs based on the notion of approximate information state (AIS) is proposed in~\citep{subramanian2022approximate}. It is shown that a planning policy computed using an AIS is approximately optimal with bounded loss of optimality. Furthermore, an actor-critic algorithm which uses the AIS-approximation losses as an auxiliary loss is presented and it is demonstrated that the proposed algorithm has good empirical performance. Even though AIS may be viewed as a recurrent agent state, the analysis presented in \cite{subramanian2022approximate} is for actor-critic algorithms and is not directly applicable to RQL. 

 Approximate planning and Q-learning for POMDPs with a finite window of observations is presented in~\cite{kara2022convergence, kara2022near}, where the approximation error and convergence are quantified. The special case of just using the current observation has also been analyzed in~\cite{singh1994learning}. Even though our analysis uses similar technical tools as~\cite{singh1994learning, kara2022convergence,kara2022near}, the analysis of these papers is for Q-learning with finite window of observations and is not directly applicable to~RQL. 

Regret guarantees for RL agents operating in non-Markovian environments and using an optimistic variant of Q-learning is presented in~\cite{dong2021simple}. Even though the agent state in~\cite{dong2021simple} is a recurrent state, 
the analysis of \cite{dong2021simple} is tuned for an optimistic variant of Q-learning and is not directly applicable to RQL. 

The papers closest to our work are \cite{majeed2018qlearning,chandak2022reinforcement} which establish convergence of Q-learning in a non-Markovian environment under the assumption that the state-observation-action process is stationary and ergodic (an additional technical assumption of state uniformity is also imposed in \cite{majeed2018qlearning}). Asymptotic rates of convergence are also characterized in \cite{chandak2022reinforcement}. However, \cite{majeed2018qlearning, chandak2022reinforcement} do not present explicit approximation bounds. In our analysis, we do not assume that the system is stationary and provide approximation bounds.

\paragraph{Contributions} Our main contributions are as follows. First, we show that in spite of the non-Markovian evolution of the agent state, RQL converges. As far as we are aware, this is the first result that establishes the convergence of RQL without making any assumptions on the stationarity of the agent state. Second, using ideas from approximate information state (AIS)~\cite{subramanian2022approximate}, we quantify the quality of the converged limit of RQL in terms of error in representation.  Third, we propose a variant of RQL called RQL-AIS which incorporates AIS losses. We illustrate via detailed numerical experiments that RQL-AIS learns better than R2D2~\cite{kapturowski2018recurrent}, which is the state-of-the-art RQL algorithm for POMDPs. 
We also empirically demonstrate that there is a strong correlation between the performance of RQL over time and the loss associated with the AIS representation.

\section{Background}

\paragraph{Partially observable Markov decision processes (POMDPs)}
A partially observable Markov decision process (POMDP) is a tuple $\langle \ALPHABET S, \ALPHABET Y, \ALPHABET A, P, O, r, \gamma \rangle$ where $\ALPHABET S$ denotes the state space, $\ALPHABET Y$ denotes the observation space, $\ALPHABET A$ denotes the action space, $P \colon \ALPHABET S \times \ALPHABET A \to \Delta(\ALPHABET S)$ denote the state transition matrix, $O \colon \ALPHABET S \times \ALPHABET A \to \Delta(\ALPHABET Y)$ denotes the observation probability matrix, $r \colon \ALPHABET S \times \ALPHABET A \to \reals$ is the reward function and $\gamma \in [0,1)$ denotes the discount factor.

We follow the standard notation from probability theory and use uppercase letters to denote random variables and lowercase letters to denote their realizations. In particular, we use $S_t$, $Y_t$, $A_t$ to denote the state, observation, and action at time~$t$ and $H_t = (Y_1, A_1, Y_2, A_2, \dots, Y_t)$ to denote the history of observations and actions until time~$t$. Let ${\ALPHABET H}_t = \ALPHABET Y^t \times {\ALPHABET A}^{t-1}$ denote the space of all histories until time $t$. We use $R_t = r(S_t, A_t)$ to denote the random reward received at time~$t$.

A policy $\pi = (\pi_1, \pi_2, \dots)$ is a collection of history dependent randomized decision rules $\pi_t \colon {\ALPHABET H}_t \to \Delta(\ALPHABET A)$ such that the action at time~$t$ is chosen according to  $A_t \sim \pi_t(H_t)$. 
The performance of any policy $\pi$ starting from history $h_t \in \ALPHABET H_t$ at time~$t$ is given by the value function $V^{\pi}_t(h_t)$ defined as
\begin{align} \label{eq:value_fn_defn}
 V^{\pi}_t(h_t) \coloneqq \EXP^{\pi} \Bigl[ \medop\sum_{\tau=t}^{\infty} \gamma^{\tau-t} r(S_{\tau}, A_{\tau}) \Bigm| H_t = h_t \Bigr].
\end{align}
The corresponding action-value function or Q-function $Q^{\pi}_t(h_t, a_t)$ is defined as
\begin{align} \label{eq:Q_value_fn_defn}
 Q^{\pi}_t(h_t, a_t) \coloneqq \EXP^{\pi} \bigl[ r(S_t, A_t) + \gamma V^{\pi}_{t+1}(H_{t+1}) \bigm| H_t = h_t, A_t = a_t \bigr].
\end{align}

    A policy $\pi^\star$ is called \emph{optimal} if for every other policy $\pi$, we have
        \(
           V^{\pi^\star}_t(h_t) \ge V^{\pi}_t(h_t), 
        \)
        for all $t \in \mathds{Z}_{> 0}$ and $h_t \in \ALPHABET H_t$.
    The value function and action-value function of optimal policies are denoted by $V^\star_t$ and $Q^\star_t$.

\paragraph{Integral Probability Metrics (IPMs)} \label{subsec:IPM}
Integral probability metrics (IPMs) are a family of semi-metrics on probability measures defined in terms of a dual relationship~\citep{muller1997integral}.

\begin{definition}\label{def:ipm}
  Let $(\ALPHABET X, \mathscr{G})$ be a measurable space and $\F$
  be a class of measurable real-valued functions on $(\ALPHABET X,
  \mathscr{G})$. The integral probability metric (IPM) between two probability distributions $\mu, \nu \in \mathscr{P}(\ALPHABET X)$ with respect to the function class $\F$ is defined as
  \(
    d_{\F}(\mu, \nu) \coloneqq \sup_{f \in \F}\bigl |
    \int_{\ALPHABET{X}} fd\mu - \int_{\ALPHABET{X}} fd\nu \bigr|.
  \)
\end{definition}

A key property of IPMs is that for any function $f$ (not necessarily in $\F$), we have
\begin{equation} \label{eq:minkowski}
  \Bigl| \medint\int_{\ALPHABET X} f d\mu - \medint\int_{\ALPHABET X} f d\nu \Bigr|
  \le \rho_{\F}(f) \cdot d_{\F}(\mu, \nu),
\end{equation}
where 
\(
  \rho_{\F}(f) \coloneqq \inf\{
  \rho \in \reals_{> 0} : \rho^{-1} f \in \F \}
\)
is called the Minkowski functional of $f$.

Some examples of IPMs are as follows:
(i)~\textbf{Total variation distance} where $\F = \F_{\TV} \coloneqq \{f : \SPAN(f) \le 1\}$ (where $\SPAN(f)$ is the span semi-norm of a function). For this case, $\rho_{\TV}(f) = \SPAN(f)$.
(ii)~\textbf{Wasserstein distance} where $\F = \F_{\Was} \coloneqq \{ f : \LIP(f) \le 1 \}$ (where $\ALPHABET X$ is a metric space and $\LIP(f)$ is the Lipschitz constant of the function $f$, computed with respect to the metric on $\ALPHABET X$). For this case, $\rho_{\Was}(f) = \LIP(f)$.
(iii)~\textbf{Maximum mean discrepancy (MMD)} where $\F = \F_{\MMD} \coloneqq \{ f \in \mathcal{H} \colon \| f \|_{\mathcal{H}} \leq 1\}$ (where $\mathcal{H}$ is a reproducing kernel Hilbert space of real-valued functions on $\ALPHABET X$ and $\| f \|_{\mathcal{H}}$ is the Hilbert space norm of $f$). For this case, $\rho_{\MMD}(f) = \|f\|_{\mathcal{H}}$. 

\paragraph{Approximate information state (AIS)}

Approximate information state (AIS) is a self-predictive representation for POMDPs, first proposed in \cite{subramanian2022approximate}.

\begin{definition} \label{def:AIS}
Given a function class $\F$ and a measurable space $\ALPHABET Z$, an $(\varepsilon_t, \delta_t)_{t \ge 1}$ AIS-generator is a tuple $\langle \{\sigma_t\}_{t \ge 1}, \tilde P, \tilde r\rangle$ of history compression functions $\sigma_t \colon \ALPHABET H_t \to \ALPHABET Z$, transition approximator $\tilde P \colon \ALPHABET Z \times \ALPHABET A \to \Delta(\ALPHABET Z)$, and reward approximator $\tilde r \colon \ALPHABET Z \times \ALPHABET A \to \reals$ such that
for all $h_t \in \ALPHABET H_t$ and $a_t \in \ALPHABET A$
\begin{align*}
\left|\mathbb{E}\left[R_t \mid H_t=h_t, A_t=a_t\right]-\tilde{r}\left(\sigma_t\left(h_t\right), a_t\right)\right|
&\le \varepsilon_t, 
\\
 d_{\F}\bigl(\mathbb{P}\left(Z = \cdot \mid H_t=h_t, A_t=a_t\right), \tilde{P}(\cdot \mid \sigma_t\left(h_t\right), a_t)\bigr) & \le \delta_t.
\end{align*}
\end{definition}
Given an AIS generator, consider the following dynamic program:
\begin{equation}\label{eq:ADP}
    \tilde Q(z,a) = \tilde r(z,a) + \gamma \medint\int_{\ALPHABET Z} \tilde P(dz' \mid z,a) \max_{\tilde a \in \ALPHABET A}
    \tilde Q(z', \tilde a).
\end{equation}
Let $\tilde Q^\star$ denote the unique fixed point of~\eqref{eq:ADP}.
Define $\tilde V^\star \colon \ALPHABET Z \to \reals$ to be the value function corresponding to $\tilde Q^\star$ and $\tilde \pi^\star \colon \ALPHABET Z \to \ALPHABET A$ to be the greedy policy\footnote{\label{fnt:arg-max}To avoid ambiguity due to the non-uniqueness of the arg-max, we assume that there is a deterministic rule to break ties is pre-specified so that the arg-max is always unique.} with respect to $\tilde Q^\star$, i.e.,
\[
  \tilde V^\star(z) = \max_{a \in \ALPHABET A} \tilde Q^\star(z,a), 
  \quad\text{and}\quad
  \tilde \pi^\star(z) = \arg\max_{a \in \ALPHABET A} \tilde Q^\star(z,a).
\]
Then, the following result is a generalization of~\cite[Theorem 27]{subramanian2022approximate}.
\begin{theorem}\label{thm:ais}
    Let $\tilde \pi = (\tilde \pi_1, \tilde \pi_2, \dots)$ be a time-varying and history-dependent policy  given by $\tilde \pi_t(h_t) = \tilde \pi^\star(\sigma_t(h_t))$. Then, for any time~$t$ and any history $h_t \in \ALPHABET H_t$ and action $a_t \in \ALPHABET A$, we have
    \begin{itemize}[leftmargin=1em, topsep=0pt, partopsep=0pt]
    \item \textbf{Bounds on value approximation:}
    \begin{align}
        \bigl| Q^\star_t(h_t, a_t) - \tilde Q^\star(\sigma_t(h_t), a_t) \bigr| &\le 
        (1-\gamma)^{-1}\bigl[ \bar \varepsilon_t + \gamma \bar \delta_t \rho_{\F}(\tilde V^\star) \bigr],
        \\
        \bigl| V^\star_t(h_t) - \tilde V^\star(\sigma_t(h_t)) \bigr| &\le 
        (1-\gamma)^{-1}\bigl[ \bar \varepsilon_t + \gamma \bar \delta_t \rho_{\F}(\tilde V^\star) \bigr],
    \end{align}
    where
    \(
        \bar \varepsilon_t = (1-\gamma) \sum_{\tau = t}^\infty \gamma^{\tau -t }\varepsilon_{\tau}
    \)
    and
    \(
        \bar \delta_t = (1-\gamma) \sum_{\tau = t}^\infty \gamma^{\tau -t }\delta_{\tau}.
    \)
    \item \textbf{Bounds on policy approximation:}
    \begin{equation} 
        \bigl| V^\star_t(h_t) - V^{\tilde \pi}_t(h_t) \bigr| \le 
        2 (1-\gamma)^{-1}\bigl[  \bar \varepsilon_t +  \gamma \bar \delta_t \rho_{\F}(\tilde V^\star) \bigr].
    \end{equation}
    \end{itemize}
    
\end{theorem}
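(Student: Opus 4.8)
The plan is to follow the recursion argument of \cite{subramanian2022approximate}, but to track the time-varying errors $\varepsilon_t, \delta_t$ explicitly so that they aggregate into the discounted sums $\bar\varepsilon_t$ and $\bar\delta_t$. The starting point is to write down the two Bellman-type identities in play: the exact one,
\[
Q^\star_t(h_t,a_t) = \EXP[R_t \mid H_t = h_t, A_t = a_t] + \gamma\,\EXP[V^\star_{t+1}(H_{t+1}) \mid H_t = h_t, A_t = a_t],
\]
and the AIS fixed-point equation \eqref{eq:ADP} evaluated at $z_t = \sigma_t(h_t)$. Subtracting the two splits the difference $Q^\star_t(h_t,a_t) - \tilde Q^\star(\sigma_t(h_t),a_t)$ into a reward gap and a discounted continuation gap, which I would bound separately.

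The reward gap is bounded by $\varepsilon_t$ directly from Definition~\ref{def:AIS}. For the continuation gap the key step is to add and subtract $\EXP[\tilde V^\star(\sigma_{t+1}(H_{t+1})) \mid h_t,a_t]$, producing two pieces: (i)~$\EXP[V^\star_{t+1}(H_{t+1}) - \tilde V^\star(\sigma_{t+1}(H_{t+1})) \mid h_t, a_t]$, controlled by the next-step value error, and (ii)~the gap between integrating $\tilde V^\star$ against the true law of $Z_{t+1} = \sigma_{t+1}(H_{t+1})$ and against $\tilde P(\cdot \mid z_t, a_t)$. Recognizing that the distribution $\PR(Z = \cdot \mid h_t,a_t)$ appearing in Definition~\ref{def:AIS} is exactly the law of the next AIS, piece~(ii) is bounded via the IPM inequality \eqref{eq:minkowski} with test function $f = \tilde V^\star$, yielding $\rho_{\F}(\tilde V^\star)\,\delta_t$.

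Next I would introduce the uniform errors $\beta_t := \sup_{h_t,a_t}|Q^\star_t(h_t,a_t) - \tilde Q^\star(\sigma_t(h_t),a_t)|$ and $\alpha_t := \sup_{h_t}|V^\star_t(h_t) - \tilde V^\star(\sigma_t(h_t))|$. The elementary bound $|\max_a f(a) - \max_a g(a)| \le \max_a|f(a)-g(a)|$ gives $\alpha_t \le \beta_t$, while the decomposition above yields the recursion $\beta_t \le \varepsilon_t + \gamma\rho_{\F}(\tilde V^\star)\delta_t + \gamma\beta_{t+1}$. Unrolling this geometric recursion (the terminal term $\gamma^{T}\beta_{t+T}$ vanishing since the rewards, hence all Q-functions, are bounded) and substituting the definitions of $\bar\varepsilon_t$ and $\bar\delta_t$ gives $\beta_t \le (1-\gamma)^{-1}[\bar\varepsilon_t + \gamma\bar\delta_t\rho_{\F}(\tilde V^\star)]$, which together with $\alpha_t \le \beta_t$ establishes both value-approximation bounds.

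For the policy bound I would run the identical argument with $V^{\tilde\pi}_t$ in place of $V^\star_t$: fixing the action $a_t = \tilde\pi^\star(z_t)$ makes \eqref{eq:ADP} read $\tilde V^\star(z_t) = \tilde r(z_t,a_t) + \gamma\int \tilde P(dz'\mid z_t,a_t)\tilde V^\star(z')$, and the same reward-gap and IPM estimates produce $\sup_{h_t}|V^{\tilde\pi}_t(h_t) - \tilde V^\star(\sigma_t(h_t))| \le (1-\gamma)^{-1}[\bar\varepsilon_t + \gamma\bar\delta_t\rho_{\F}(\tilde V^\star)]$. A triangle inequality through $\tilde V^\star(\sigma_t(h_t))$, combining this with the value bound $\alpha_t$, then doubles the constant and gives the claimed policy-approximation bound. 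I expect the main obstacle to be step~(ii): one must verify that the IPM condition in Definition~\ref{def:AIS} is stated precisely for the law of $Z_{t+1}$, so that $\tilde V^\star$ is an admissible (unnormalized) test function up to the Minkowski factor $\rho_{\F}(\tilde V^\star)$, and that the time-varying errors telescope cleanly even though $\sigma_t$ itself changes with $t$; once these are in place, everything else is the standard discounted unrolling.
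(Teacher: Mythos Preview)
Your argument is correct, and the core decomposition (reward gap $\le \varepsilon_t$, IPM gap $\le \rho_{\F}(\tilde V^\star)\delta_t$, plus next-step value error) is the same as the paper's. The execution, however, is genuinely more direct than what the paper does. The paper proceeds in two stages: it first introduces finite-horizon truncations $V^\star_{t,T}$, $V^{\tilde\pi}_{t,T}$ and the Bellman iterates $\tilde V^{T-t}$, proves a sandwich bound $V^\star_{t,T}+\gamma^{T-t}r_{\textup{MIN}}/(1-\gamma)\le V^\star_t\le V^\star_{t,T}+\gamma^{T-t}r_{\textup{MAX}}/(1-\gamma)$, and separately proves by backward induction a finite-horizon bound $|V^\star_{t,T}(h_t)-\tilde V^{T-t}(\sigma_t(h_t))|\le\eta_{t,T}$ with time-varying Minkowski factors $\rho_{\F}(\tilde V^{T-\tau})$; it then combines the two, sends $T\to\infty$, and invokes the Banach fixed-point theorem together with continuity of $\rho_{\F}(\cdot)$ to replace $\rho_{\F}(\tilde V^{T-\tau})$ by $\rho_{\F}(\tilde V^\star)$. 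Your route bypasses all of this machinery by working with the infinite-horizon objects from the start and unrolling the single recursion $\beta_t\le\varepsilon_t+\gamma\rho_{\F}(\tilde V^\star)\delta_t+\gamma\beta_{t+1}$, killing the tail via uniform boundedness of $Q^\star_t$ and $\tilde Q^\star$. This is shorter and avoids the continuity-of-$\rho_{\F}$ step; the paper's detour through finite horizons buys nothing extra here, though it does make the role of the Bellman iterates more explicit.
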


\paragraph{Recurrent Neural Networks (RNN)}

Recurrent neural networks (RNNs) are neural networks with feedback connections that are used to process sequential data by keeping track of a state. At an abstract level, we may model an RNN with a hidden state\footnote{Normally, the hidden state of an RNN is denoted using $h_t$. However, we are using $h_t$ to denote the history of a POMDP. So, we use $z_t$ to denote the hidden state of an RNN.} $z_t$ to be a function of the past sequence of inputs $x_1, \dots, x_t$, which is updated recursively using a non-linear activation function: $z_t = f(z_{t-1}, x_t)$. Typically, $f(\cdot)$ is a parameterized family of functions, e.g., in vanilla RNN, $f(z_{t-1}, x_t) = \tanh(W_{zz} z_{t-1} + W_{zx} x_t + W_b)$, where $(W_{zz}, W_{zx}, W_b)$ are parameters. In practice, one uses more sophisticated RNN architectures such as long short-term memory (LSTM)~\citep{lstm} or gated recurrent units (GRUs) \citep{gru}, which avoid the problem of vanishing gradients.

\paragraph{Recurrent Q-learning}

Recurrent Q-learning (RQL) is a variant of Q-learning algorithm for POMDPs, which uses an RNN to estimate the Q-function $Q_t(h_t, a_t)$ \citep{schmidhuber1991reinforcement,harp1992recurrent}. In particular, an RNN with input $(Y_t, A_{t-1})$ is used to generate a hidden state $z_t \in \ALPHABET Z$ which is updated recursively as $z_t = f(z_{t-1}, y_t, a_{t-1})$, where $f(\cdot)$ is the update function of an RNN. We will sometimes write $z_t = \sigma_t(h_t)$ to highlight the fact that $z_t$ is a function of the history $h_t$.

In RQL the learning agent uses an exploration policy $\EXPLORE$ to generate experience and updates an estimate of the Q-function using the following recursion:
\begin{equation}\label{eq:Q-update}
    \widehat Q_{t+1}(z_t, a_t) = \widehat Q_t(z_t, a_t) + \alpha_t(z_t, a_t)\bigl[ R_t + \gamma \max_{\tilde a \in \ALPHABET A} \widehat Q_t(z_{t+1}, \tilde a) - \widehat Q_t(z_t, a_t) \bigr]
\end{equation}
where $\{\alpha_{t}(z_t, a_t)\}_{t \geq 1}$ is the learning rate. Define $\hat V_t \colon \ALPHABET Z \to \reals$ to be the value function corresponding to $\widehat Q_t$ and $\hat \pi_t \colon \ALPHABET Z \to \ALPHABET A$ to be the greedy policy w.r.t.\ $\widehat Q_t$, i.e., 
\[
    \hat V_t(z) = \max_{a \in \ALPHABET A}\widehat Q_t(z,a)
    \quad\text{and}\quad
    \hat \pi_t(z) = \arg \max_{a \in \ALPHABET A}\widehat Q_t(z,a).
\]

\section{Theoretical results}

The key challenge in characterizing the convergence of RQL is that the agent state $\{Z_t\}_{t \ge 1}$ is not a controlled Markov process. Therefore, the standard results on the convergence of Q-learning~\cite{jaakkola1993convergence} are not directly applicable. In Sec.~\ref{sec:convergence}, we show that it is possible to adapt the standard convergence arguments to show that RQL converges. 

The quality of the converged solution depends on choice of the exploration policy as well as the representation. The dependence of the representation is not surprising. For example, it is clear that when the representation is bad (e.g., a representation that maps all histories to a single agent state), then RQL will converge to a limit which is far from optimal. So, it is important to quantify the degree of sub-optimality of the converged limit. We do so in Sec.~\ref{sec:approx}.

\begin{extra}
In this section, we establish the theoretical properties of RQL. In Sec.~\ref{sec:convergence}, we assume that the state representation is fixed and given by some recurrent network of the form~\eqref{eq:Z-update} and the Q-function is updated according to~\eqref{eq:Q-update}. We show that under mild technical conditions, the iterates of~\eqref{eq:Q-update} converge almost surely to a limit.
\end{extra}

\subsection{Establishing the convergence of RQL}\label{sec:convergence}
\begin{lemma}\label{lem:markov}
    Under any policy $\pi \colon \ALPHABET Z \to \Delta(\ALPHABET A)$, the process $\{(S_t, Y_t, Z_t, A_t)\}_{t \ge 1}$ is a Markov chain.
\end{lemma}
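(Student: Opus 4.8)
The plan is to verify the defining Markov property directly, by showing that the one-step conditional law of the augmented tuple $(S_{t+1}, Y_{t+1}, Z_{t+1}, A_{t+1})$ given the entire past depends on that past only through the current tuple $(S_t, Y_t, Z_t, A_t)$. The essential point to keep in mind is the generative timing of the POMDP together with the \emph{recursive} nature of the agent-state update: the next state is drawn as $S_{t+1} \sim P(\cdot \mid S_t, A_t)$, the next observation as $Y_{t+1} \sim O(\cdot \mid S_{t+1}, A_t)$, the next agent state is the deterministic image $Z_{t+1} = f(Z_t, Y_{t+1}, A_t)$, and the next action as $A_{t+1} \sim \pi(\cdot \mid Z_{t+1})$.

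First I would fix the history $\sigma$-field $\ALPHABET F_t = \sigma(S_{1:t}, Y_{1:t}, Z_{1:t}, A_{1:t})$ and compute the conditional law of $(S_{t+1}, Y_{t+1}, Z_{t+1}, A_{t+1})$ given $\ALPHABET F_t$. Applying the chain rule in the order (state, observation, agent state, action) and using the conditional-independence structure above, this factorizes as
\[
  P(ds' \mid S_t, A_t)\, O(dy' \mid s', A_t)\, \IND\{z' = f(Z_t, y', A_t)\}\, \pi(da' \mid z').
\]
Each of the four factors depends on $\ALPHABET F_t$ only through $(S_t, Z_t, A_t)$, a subcollection of the current tuple; integrating out the intermediate variables $s'$ and $y'$ therefore yields a transition kernel that is measurable with respect to $(S_t, Y_t, Z_t, A_t)$ alone. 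This is exactly the Markov property for the augmented process.

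The step I expect to require the most care is justifying that the past enters only through $Z_t$ despite $Z_t = \sigma_t(h_t)$ being, by definition, a function of the entire history $h_t$. The resolution is that the agent-state update is recursive, $Z_{t+1} = f(Z_t, Y_{t+1}, A_t)$, so the history is fully summarized, for the purpose of propagating $Z$, by the \emph{current} value $Z_t$; no earlier observations or actions are needed. Two smaller points also need attention: the $Z$-update is deterministic, which I handle with the indicator factor above (equivalently, a Dirac kernel), and the policy must depend on the past only through $Z_t$, which holds precisely because $\pi \colon \ALPHABET Z \to \Delta(\ALPHABET A)$ is stationary in the agent state. I would also note that although $Y_t$ appears in the tuple it does not enter the transition kernel; it is retained in the augmented state only so that the chain is well-defined as stated, and its presence does not affect the Markov property.
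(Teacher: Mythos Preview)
Your argument is correct and follows essentially the same route as the paper's own proof: factorize the one-step conditional law using the POMDP dynamics $P$, the observation kernel $O$, the deterministic recurrent update $f$, and the agent-state policy $\pi$, and observe that every factor depends on the past only through $(S_t, Z_t, A_t)$. The only cosmetic difference is that you fold the action $A_{t+1}$ into the same factorization, whereas the paper first establishes that $\{(S_t, Y_t, Z_t)\}$ is a controlled Markov process and then invokes $A_t \sim \pi(Z_t)$ to close the chain.
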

\begin{extra}
\begin{lemma}\label{lem:markov}
    Under any policy $\pi \colon \ALPHABET Z \to \Delta(\ALPHABET A)$, the process $\{S_t, Y_t, Z_t \}_{t \ge 1}$ is a controlled Markov process controlled by $\{A_t\}_{t \ge 1}$. In particular, for any time~$t$,
    \begin{equation}\label{eq:markov}
        \PR(S_{t+1}, Y_{t+1}, Z_{t+1} \mid S_{1:t}, Y_{1:t}, Z_{1:t}, A_{1:t})
        =
        \PR(S_{t+1}, Y_{t+1}, Z_{t+1} \mid S_{t}, Y_{t}, Z_{t}, A_{t}).
    \end{equation}
    Therefore, the process $\{(S_t, Y_t, Z_t, A_t)\}_{t \ge 1}$ is a Markov chain.
\end{lemma}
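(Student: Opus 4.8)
The plan is to verify the defining Markov property directly by showing that the one-step conditional law of the joint process factorizes into elementary mechanisms, each of which inspects the past only through the current quadruple $(S_t, Y_t, Z_t, A_t)$. Writing $\mathcal{F}_t$ for the information generated by the full history $(S_{1:t}, Y_{1:t}, Z_{1:t}, A_{1:t})$, I would introduce the four newly generated variables in the temporal order in which the environment and the agent produce them, and apply the chain rule:
\[
\begin{aligned}
\PR(S_{t+1}, Y_{t+1}, Z_{t+1}, A_{t+1} \mid \mathcal{F}_t)
&= \PR(S_{t+1}\mid \mathcal{F}_t)\,
\PR(Y_{t+1}\mid S_{t+1},\mathcal{F}_t)\\
&\quad\times
\PR(Z_{t+1}\mid S_{t+1}, Y_{t+1},\mathcal{F}_t)\,
\PR(A_{t+1}\mid S_{t+1}, Y_{t+1}, Z_{t+1},\mathcal{F}_t).
\end{aligned}
\]
Since a process whose one-step conditional law depends on the past only through its current value is Markov, it suffices to check this reduction for each factor.

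I would then evaluate the factors against the generative model. The state-transition factor equals $P(S_{t+1}\mid S_t, A_t)$ by the POMDP dynamics; the observation factor equals $O(Y_{t+1}\mid S_{t+1}, A_t)$ by the observation model; the agent-state factor is the degenerate kernel $\IND\{Z_{t+1} = f(Z_t, Y_{t+1}, A_t)\}$ coming from the recurrent update; and the action factor equals $\pi(A_{t+1}\mid Z_{t+1})$ by definition of the policy. Each of these depends on $\mathcal{F}_t$ only through a subset of $(S_t, Y_t, Z_t, A_t)$, so their product does too, which is exactly the claimed property.

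The step requiring the most care is the first factor: justifying $\PR(S_{t+1}\mid \mathcal{F}_t) = P(S_{t+1}\mid S_t, A_t)$ even though $A_t$ is itself produced from the history through $Z_t$. The point is that once $A_t$ is placed in the conditioning, the underlying state process evolves Markovianly, and every variable appearing in $\mathcal{F}_t$ is a measurable function of the underlying state/observation/action trajectory; conditioning on such functions adds no information about $S_{t+1}$ beyond $(S_t, A_t)$. The \emph{deterministic} recurrent construction of $Z$ is essential here, since $Z_{t+1}$ is a fixed function of $(Z_t, Y_{t+1}, A_t)$ rather than of the entire history, no additional dependence on the remote past enters the factorization — this is precisely what would break if the agent state were updated using information not summarized in $Z_t$. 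I expect the remaining factors to be routine, so the whole argument reduces to this single conditional-independence observation together with careful bookkeeping in the chain rule.
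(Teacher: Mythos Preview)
Your argument is correct and is essentially the same as the paper's: both factor the one-step conditional law into the state transition $P(S_{t+1}\mid S_t,A_t)$, the observation kernel $O(Y_{t+1}\mid S_{t+1},A_t)$, and the deterministic recurrent update $\IND\{Z_{t+1}=f(Z_t,Y_{t+1},A_t)\}$, and then invoke $A_t\sim\pi(Z_t)$ for the Markov-chain conclusion. The only cosmetic difference is that the paper first establishes~\eqref{eq:markov} and then appends the policy step, whereas you fold $A_{t+1}$ directly into the chain-rule product; both routes yield the same factorization.
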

\begin{proof}
        We have
        \begin{align*}
            \PR(S_{t+1}, Y_{t+1}, Z_{t+1} | S_{1:t}, Y_{1:t}, Z_{1:t}, A_{1:t}) 
            &= \IND_{\{ Z_{t+1} = f(Z_{t}, Y_{t+1}, A_{t})\}} O(Y_{t+1} | S_{t+1}, A_{t}) P(S_{t+1} | S_{t}, A_{t}) \\
            &= \PR(S_{t+1}, Y_{t+1}, Z_{t+1} | S_{t}, Y_{t}, Z_{t}, A_{t}),
        \end{align*}
       which establishes~\eqref{eq:markov}. Since $A_t \sim \pi(Z_t)$, we have that  $\{(S_t, Y_t, Z_t, A_t)\}_{t \ge 1}$ is a Markov chain.
\end{proof}
\end{extra}
We impose the following assumptions:
\begin{enumerate}[leftmargin=2.5em,topsep=0pt,partopsep=0pt,noitemsep]
    \item[\textbf{(A1)}]
        The state space, action space, and the recurrent state space are finite. 
    \item[\textbf{(A2)}]
        The exploration policy $\EXPLORE \colon \ALPHABET Z \to \Delta(\ALPHABET A)$ is such that the Markov chain $\{(S_t, Y_t, Z_t, A_t)\}_{t \ge 1}$ has a unique stationary distribution $\xi$. Moreover, for every $(s,y,z,a)$, $\xi(s,y,z,a) > 0$. 
    \item[\textbf{(A3)}]
        The learning rate $\alpha_t(z,a)$ is 
        given by 
        \(
           \alpha_t(z,a) =  \IND_{\{Z_t = z, A_t = a \}} \big/ 
           \bigl( {1 + \textstyle\sum_{\tau = 0}^t \IND_{\{Z_\tau = z , A_\tau = a \}}} \bigr).
        \)
\end{enumerate}

We impose assumption \textbf{(A1)} to analyze the simplest version of the RQL. Assumption \textbf{(A2)} is a mild assumption on the exploration policy and is commonly assumed in several variations of Q-learning with function approximation~\citep{Tsitsiklis1997temporal}. Assumption \textbf{(A3)} is a common assumption on the step-size of stochastic approximation algorithms.

For the ease of notation, we continue to use $\xi$ to denote marginal and conditional distributions w.r.t.\ $\xi$. For example, $\xi(y,z,a) = \sum_{s \in \ALPHABET S} \xi(s,y,z,a)$ and similar notation holds for other marginals. Similarly, $\xi(s | z) = \xi(s,z)/\xi(z)$ and similar notation holds for other conditional distributions. 

    Given a steady-state distribution $\xi$ corresponding to the exploration policy, define
    a reward function $r_\xi \colon \ALPHABET Z \times \ALPHABET A \to \reals$ and transition probability $P_\xi \colon \ALPHABET Z \times \ALPHABET A \to \Delta(\ALPHABET Z)$ as follows:
    \begin{align*}
        r_\xi(z,a) &= \medop\sum_{s \in \ALPHABET S} r(s,a) \xi(s \mid z, a), \\
        P_\xi(z' \mid z, a) &= \medop\sum_{s \in \ALPHABET S}  \xi(s \mid z, a) \medop\sum_{s' \in \ALPHABET S} P(s' | s, a) \medop\sum_{y' \in \ALPHABET Y} O(y' | s', a) \IND_{\{ z' = f(z,y',a) \}}.
    \end{align*}
    Furthermore, define $Q_\xi^\star$ to be the unique fixed point of the following fixed point equation:
    \begin{equation}\label{eq:Q-xi}
    Q_\xi^\star(z, a) = r_\xi (z, a) + \gamma \medop\sum_{z' \in \ALPHABET Z} P_\xi(z' \mid z, a) \max_{\tilde a \in \ALPHABET A} Q_\xi^\star(z, \tilde a). 
    \end{equation}
Define $V^\star_{\xi} \colon \ALPHABET Z \to \reals$ be the value function corresponding to $Q^\star_{\xi}$ and $\pi^\star_\xi \colon \ALPHABET Z \to \ALPHABET A$ to be the greedy policy with respect to $Q^\star_\xi$, i.e.,
\[
  V^\star_\xi(z) = \max_{a \in \ALPHABET A} Q^\star_\xi(z,a), 
  \quad\text{and}\quad
  \pi^\star_\xi(z) = \arg\max_{a \in \ALPHABET A} Q^\star_\xi(z,a).
\]

\begin{theorem}\label{thm:convergence}
    Under Assumptions~{\bf (A1)}--{\bf (A3)}, the iterates $\{\widehat Q_t\}_{t \ge 1}$ of~\eqref{eq:Q-update} converge almost surely to $Q_\xi^\star$ given by~\eqref{eq:Q-xi}. Therefore $\{\hat \pi_t\}_{t \ge 1}$ converges to $\pi^\star_\xi$ (see footnote~\ref{fnt:arg-max} for uniqueness of arg-max).
\end{theorem}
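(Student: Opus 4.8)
The plan is to read the update~\eqref{eq:Q-update} as an instance of asynchronous stochastic approximation and to identify the limiting operator whose fixed point is $Q_\xi^\star$. By \autoref{lem:markov}, under the fixed exploration policy $\EXPLORE$ the process $\{(S_t, Y_t, Z_t, A_t)\}_{t\ge1}$ is a Markov chain, and by \textbf{(A2)} it is ergodic with a stationary distribution $\xi$ that is strictly positive everywhere. I would first introduce the operator $\mathcal{B}_\xi$ on functions $Q\colon \ALPHABET Z\times\ALPHABET A\to\reals$ by $(\mathcal{B}_\xi Q)(z,a) = r_\xi(z,a) + \gamma\sum_{z'}P_\xi(z'\mid z,a)\max_{\tilde a}Q(z',\tilde a)$, and verify that $P_\xi(\cdot\mid z,a)$ is a genuine probability distribution on $\ALPHABET Z$ (summing $\IND_{\{z'=f(z,y',a)\}}$ over $z'$ contributes one for each $y'$). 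Since $\ALPHABET Z$ and $\ALPHABET A$ are finite by \textbf{(A1)}, $\mathcal{B}_\xi$ is then exactly the Bellman optimality operator of a finite discounted MDP, hence a $\gamma$-contraction in the sup norm, so the $Q_\xi^\star$ of~\eqref{eq:Q-xi} is its unique fixed point. This reduces the theorem to showing that the RQL iterates track $\mathcal{B}_\xi$.

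Next I would rewrite~\eqref{eq:Q-update} as $\widehat Q_{t+1}(z,a) = (1-\alpha_t(z,a))\widehat Q_t(z,a) + \alpha_t(z,a) M_t$ with target $M_t = R_t + \gamma\max_{\tilde a}\widehat Q_t(Z_{t+1},\tilde a)$, and decompose the target against $\mathcal{B}_\xi$. Writing $\mathcal{F}_t$ for the $\sigma$-field generated by $(S_{1:t},Y_{1:t},Z_{1:t},A_{1:t})$, so that $\widehat Q_t$ is $\mathcal{F}_t$-measurable, a direct computation using $R_t=r(S_t,A_t)$, $S_{t+1}\sim P(\cdot\mid S_t,A_t)$, $Y_{t+1}\sim O(\cdot\mid S_{t+1},A_t)$ and $Z_{t+1}=f(Z_t,Y_{t+1},A_t)$ gives $\EXP[M_t\mid\mathcal{F}_t] = g(S_t,Z_t,A_t;\widehat Q_t)$, where $g(s,z,a;Q) = r(s,a) + \gamma\sum_{s'}P(s'\mid s,a)\sum_{y'}O(y'\mid s',a)\max_{\tilde a}Q(f(z,y',a),\tilde a)$. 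Averaging $g$ against $\xi(s\mid z,a)$ reproduces precisely $r_\xi$ and $P_\xi$, so $\sum_s \xi(s\mid z,a)\,g(s,z,a;Q) = (\mathcal{B}_\xi Q)(z,a)$. I would therefore split $M_t - (\mathcal{B}_\xi\widehat Q_t)(Z_t,A_t) = e_t + b_t$, where $e_t = M_t - \EXP[M_t\mid\mathcal{F}_t]$ is a martingale difference and $b_t = g(S_t,Z_t,A_t;\widehat Q_t) - (\mathcal{B}_\xi\widehat Q_t)(Z_t,A_t)$ is a Markovian bias. Boundedness of rewards together with finiteness of the spaces makes $e_t$ a square-integrable martingale difference whose conditional variance grows at most quadratically in $\|\widehat Q_t\|$, exactly as required by the standard Jaakkola--Jordan--Singh conditions.

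The main obstacle is the bias $b_t$: unlike ordinary tabular Q-learning, the one-step conditional expectation of the target does \emph{not} equal the limiting operator, because it still depends on the hidden state $S_t$ and coincides with $\mathcal{B}_\xi\widehat Q_t$ only after $S_t$ is averaged against $\xi(\cdot\mid Z_t,A_t)$. The non-Markovianity of $\{Z_t\}$ is precisely what produces this extra term. To dispose of it I would exploit ergodicity \textbf{(A2)}: for a frozen $Q$, the map $s\mapsto g(s,z,a;Q)-(\mathcal{B}_\xi Q)(z,a)$ has zero mean under $\xi(\cdot\mid z,a)$, so solving a Poisson equation for the ergodic chain lets one write $b_t$ as a martingale increment plus the increment of a bounded sequence; summing against the visit-count step sizes of \textbf{(A3)} then leaves telescoping boundary terms of order $\alpha_t$, which are negligible since $\sum_t\alpha_t(z,a)^2<\infty$ while each $(z,a)$ is visited infinitely often (guaranteed by positivity of $\xi$ and ergodicity). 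Equivalently, one may restrict to the subsequence of visit times to a fixed $(z,a)$ and invoke the ergodic theorem on the induced chain, in the spirit of \cite{singh1994learning, kara2022convergence}.

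With the bias shown to be asymptotically negligible, the recursion meets the hypotheses of the asynchronous stochastic-approximation theorem of~\cite{jaakkola1993convergence}, using the $\gamma$-contraction of $\mathcal{B}_\xi$ for the requisite conditional-mean bound, and yields $\widehat Q_t\to Q_\xi^\star$ almost surely. Finally, since $\ALPHABET Z\times\ALPHABET A$ is finite and the convergence is in sup norm, for every $z$ the greedy action eventually stabilizes under the fixed tie-breaking rule (footnote~\ref{fnt:arg-max}), so $\hat\pi_t\to\pi_\xi^\star$.
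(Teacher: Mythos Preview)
Your plan is sound and would succeed, but it takes a genuinely different route from the paper's proof. You decompose the target around the \emph{moving} point $(\mathcal{B}_\xi\widehat Q_t)(Z_t,A_t)$, which leaves a Markovian bias $b_t$ that still depends on $\widehat Q_t$ and must be killed via a Poisson-equation/slowly-varying-iterate argument. The paper instead decomposes the error $\Delta_t=\widehat Q_t-Q^\star_\xi$ around the \emph{fixed} limit $V^\star_\xi$: it splits the driving term as $F^1_t=\gamma[\hat V_t(Z_{t+1})-V^\star_\xi(Z_{t+1})]$ and $F^2_t=R_t-r_\xi(z,a)+\gamma V^\star_\xi(Z_{t+1})-\gamma\sum_{z'}P_\xi(z'\mid z,a)V^\star_\xi(z')$. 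The payoff is that $F^2_t$ does \emph{not} depend on $\widehat Q_t$, so with the inverse-visit-count step size of \textbf{(A3)} the accumulated contribution of $F^2_t$ is literally a ratio of empirical time averages along the ergodic chain, and the SLLN for Markov chains gives $W^2_t\to0$ directly---no Poisson equation needed. The remaining piece obeys $|F^1_t|\le\gamma\|\Delta_t\|_\infty$, and a short sandwiching argument (together with \cite[Lemma~3]{jaakkola1993convergence}) shows $W^1_t\to0$. Your route is the standard Markovian-noise stochastic-approximation argument and would in principle go through under the weaker step-size condition \textbf{(A3')}; the paper's route is more elementary and makes transparent why the stronger \textbf{(A3)} is assumed. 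One small caveat on your write-up: after the Poisson-equation manipulation you are no longer in the setting of \cite[Theorem~1]{jaakkola1993convergence} (which assumes a one-step conditional-mean contraction plus pure martingale noise); it would be cleaner to invoke a Markovian-noise SA result outright rather than shoehorning the conclusion back into that theorem.
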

\begin{proof}[Proof outline]
   The main idea of the proof is inspired from \cite{jaakkola1993convergence, kara2022convergence}.
   To establish that $\widehat Q_t \to Q^\star_\xi$, a.s., we will show that $\Delta_t \coloneqq \widehat Q_t - Q^\star_\xi \to 0$, a.s. Define $\hat V_t(z) = \max_{a \in \ALPHABET A} \widehat Q_t(z,a)$. Combining~\eqref{eq:Q-update} and~\eqref{eq:Q-xi}, we get that
   \begin{equation}\label{eq:Delta}
       \Delta_{t+1}(z,a) = (1 - \alpha_t(z,a)) \Delta_t(z,a) + \alpha_t(z,a) \bigl[ F^1_t(z,a) +  F^2_t(z,a) \bigr],
   \end{equation}
   where
   \allowdisplaybreaks
   \begin{align}
      F^1_t(z,a) &= \gamma \hat V_t(Z_{t+1}) - \gamma V^\star_\xi(Z_{t+1}),  
      \label{eq:F1}
      \\
      F^2_t(z,a) &=  \textstyle R_t - r_\xi(z,a) + \gamma V^\star_\xi(Z_{t+1}) - \gamma \sum_{z' \in \ALPHABET Z} P_\xi(z' | z, a) V^\star_\xi(z') .
      \label{eq:F2}
   \end{align}
   Following \cite{jaakkola1993convergence}, we view~\eqref{eq:Delta} as a linear system with two inputs and do ``state splitting'' to write
    \begin{equation}\label{eq:Delta-sum}
        \Delta_t(z,a) = W^1_t(z,a) + W^2_t(z,a) 
    \end{equation}
    where for $i \in \{1,2\}$, each ``state component'' $W^i_1(z,a)$ is initialized to $0$ and evolves for $t \ge 1$ as
    \[
        W^i_{t+1}(z,a) = (1 - \alpha_t(z,a)) W^i_t(z,a) + \alpha_t(z,a) F^i_t(z,a),
        \quad i \in \{1, 2\}.
    \]
     From~\eqref{eq:F1}, we have that
     \begin{equation}\label{eq:F0-bound}
       F^1_t(z,a) 
       = \gamma \bigl[ \hat V_{t}(Z_{t+1}) - V^\star_\xi(Z_{t+1}) \bigr]
       \le \gamma \| \hat V_{t} - V^\star_\xi \|_{\infty} 
       \le \gamma \| \widehat Q_t - Q^\star_\xi \|_{\infty}
       = \gamma \| \Delta_t \|_\infty.
     \end{equation}
    Using assumptions \textbf{(A1)}--\textbf{(A3)}, we can show that $W^2_t(z,a) \to 0$, a.s., for all $(z,a)$. See the supplementary material for proof. Therefore, there exists a set $\Omega_0$ such that $\PR(\Omega_0) = 1$ and for every $\omega \in \Omega_0$ and any $\epsilon > 0$, there exists a $T(\omega, \epsilon)$ such that for all $t > T(\omega, \epsilon)$, $| W^2_t(z,a) | < \epsilon$, a.s., for all $(z,a)$.
     Now, pick $C$ such that $\gamma(1 + 1/C) < 1$.  For any $t > T(\omega, \epsilon)$ if $\|W^1_t(z,a)\|_{\infty} > C \epsilon$, then
     \begin{equation}\label{eq:F1-bound}
        F^1_t(z,a) \le \gamma \| \Delta_t \|_{\infty} \le \gamma \| W^1_t \|_\infty + \gamma \epsilon 
        < \gamma\bigl(1 + \tfrac{1}{C}\bigr) \| W^1_t\|_\infty 
        < \| W^1_t\|_{\infty},
    \end{equation}
    where the first inequality uses~\eqref{eq:Delta-sum}, the second uses the triangle inequality, and the others follow from the definition of $C$ and $\epsilon$.  Consequently, for any $t > T(\omega,\epsilon)$ and $\|W^1_t(z,a)\|_{\infty} > C\epsilon$, we have that 
    \begin{equation}\label{eq:W-bound}
        W^1_{t+1}(z,a) = (1 - \alpha_t(z,a)) W^1_t(z,a) + \alpha_t(z,a) F^1_t(z,a) 
         < \| W^1_t \|_{\infty}.
    \end{equation}
    Hence, when $\|W^1_t\|_{\infty} > C\epsilon$, it decreases monotonically. So, there are two possibilities: either it gets below $C\epsilon$ or it never goes below $C\epsilon$. In the supplementary material, we show that the process cannot stay above $C\epsilon$ all the time. Hence, it must hit below $C\epsilon$ at some point.
    In the supplementary material, we show that once the process hits below $C\epsilon$, it stays there. Thus, we have shown that for all sufficiently large $t$, $\|W^1_t\|_{\infty} < C \epsilon$, a.s. Since $\epsilon$ is arbitrary, this implies that $W^1_t(z,a) \to 0$, a.s., for all $(z,a)$. Combining this with the fact that $W^3_t(z,a) \to 0$, we get that $\Delta_t(z,a) = W^1_t(z,a) + W^2_t(z,a) \to 0$, a.s., for all $(z,a)$.
     Let $\mathcal F_t = \sigma(W^1_{1:t}, F^1_{1:t}, \alpha_{1:t})$. Then, 
     \(
     \| \EXP[ F^1_t(z,a) \mid \mathcal F_t] \|_{\infty} \le \EXP[ \| F^1_t(z,a) \|_{\infty} \mid \mathcal F_t ] 
     \le \gamma \| \Delta_t \|_{\infty}.
     \)
     Moreover, since the state space is finite, the per-step reward is bounded. Therefore, both $\hat V_t(Z_{t+1})$ and $V^\star_\xi(Z_{t+1})$ are bounded. Therefore, there exists a constant $C$ such that 
     \(
        \text{var}( F^1_t(z,a) \mid \mathcal F_t ) \le C.
     \)
     Thus, the iteration for $W^1_t(z,a)$ satisfies all conditions of \cite[Theorem 1]{jaakkola1993convergence}, and by that result, $W^1_t(z,a) \to 0$, a.s., for all $(z,a)$. For convenience, we restate \cite[Theorem 1]{jaakkola1993convergence} in the supplementary material. 
\end{proof}

\begin{remark}\label{rem:kara}
    In the special case when $z_t = (y_{t-n:t}, a_{t-n:t-1})$ is the last $n$ observations and actions (i.e., frame-stacking), the result of \autoref{thm:convergence} recovers the result of \cite[Theorem 4.1, (i)]{kara2022convergence}.
\end{remark}
\begin{remark}\label{rem:chandak}
    In \cite[Theorem 1]{chandak2022reinforcement}, it was shown that the results of \autoref{thm:convergence} hold if \textbf{(A2)} and \textbf{(A3)} are replaced by the following:
   \begin{enumerate}[leftmargin=2.5em,topsep=0pt,partopsep=0pt,noitemsep]
       \item [\textbf{(A2')}] Assumption \textbf{(A2)} holds and the initial distribution satisfies $\PR(S_1 = s, Z_1 = z) = \xi(s,z)$.
       \item [\textbf{(A3')}] The learning rate $\alpha_t(z,a)$ satisfies 
       \(
            \sum_{t \ge 1} \alpha_t(z,a) = \infty
       \)
       and
       \(
            \sum_{t \ge 1} \alpha_t^2(z,a) < \infty
       \),
       a.s.
   \end{enumerate}
   Note that \textbf{(A2')} is stronger than \textbf{(A2)} and implies that the process $\{(S_t, Y_t, Z_t, A_t)\}_{t \ge 1}$ is stationary and ergodic.
   However, the learning rate condition \textbf{(A3')} is weaker than \textbf{(A3)}. 
\end{remark}

\autoref{thm:convergence} addresses the main challenge in convergence analysis of RQL: the non-Markovian dynamics. We have shown that RQL converges. So, the next question is: How good is the converged solution compared to the optimal? We address this question in the next section.

\subsection{Characterizing the approximation error of the converged value}\label{sec:approx}

\begin{extra}
\begin{lemma}\label{lem:indep}
    The posterior beliefs $\PR(S_t = \cdot \mid H_t = h_t)$ and $\PR(Z_{t+1} = \cdot \mid H_t = h_t, A_t = a_t)$ do not depend on the exploration policy $\EXPLORE$. 
\end{lemma}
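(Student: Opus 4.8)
The plan is to show that both conditional laws equal explicit expressions built only from the POMDP primitives $P$, $O$, the prior on $S_1$, and the deterministic recurrent map $f$, none of which involves $\EXPLORE$. I would fix an arbitrary policy $\pi$ (of which $\EXPLORE$ is a special case) together with a history $h_t = (y_1, a_1, \dots, y_t)$ of positive probability; under \textbf{(A2)} the relevant conditioning events have positive probability, so every conditional below is well defined.

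First I would treat $\PR(S_t = \cdot \mid H_t = h_t)$ via Bayes' rule. Writing the joint law of $(S_{1:t}, H_t)$ under $\pi$ as a product of the prior $\PR(S_1)$, the transition kernels $P$, the observation kernels $O$, and the policy factors, the policy contributes exactly $\Pi(h_t) \coloneqq \prod_{\tau=1}^{t-1} \pi_\tau(a_\tau \mid h_\tau)$. The key observation is that each of these factors depends only on the prefix $h_\tau$ of the fixed conditioning history and not on the state trajectory $s_{1:t}$. Hence $\Pi(h_t)$ factors out of the sum over $s_{1:t-1}$ in the numerator $\PR(S_t = s, H_t = h_t)$ and out of the sum over $s_{1:t}$ in the denominator $\PR(H_t = h_t)$, and upon taking the ratio it cancels, leaving an expression in $P$, $O$, and $\PR(S_1)$ alone. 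Equivalently, one can argue by induction using the standard nonlinear-filter recursion for the belief, which manifestly involves only $P$ and $O$.

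For $\PR(Z_{t+1} = \cdot \mid H_t = h_t, A_t = a_t)$ I would first note that, since $A_t \sim \pi_t(H_t)$ is a function of the history alone, $A_t$ and $S_t$ are conditionally independent given $H_t$; concretely $\PR(S_t = s \mid H_t = h_t, A_t = a_t) = \PR(S_t = s \mid H_t = h_t)$, which is policy-independent by the previous step. Next, because $Z_{t+1} = f(\sigma_t(h_t), Y_{t+1}, a_t)$ is a deterministic function of $(h_t, a_t, Y_{t+1})$, I would write $\PR(Z_{t+1} = z' \mid h_t, a_t) = \sum_{y'} \IND_{\{z' = f(\sigma_t(h_t), y', a_t)\}} \PR(Y_{t+1} = y' \mid h_t, a_t)$ and expand the one-step prediction as $\PR(Y_{t+1} = y' \mid h_t, a_t) = \sum_{s, s'} O(y' \mid s', a_t) P(s' \mid s, a_t) \PR(S_t = s \mid h_t, a_t)$. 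Every term on the right is policy-independent, so the claim follows.

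The main obstacle, though routine, is the careful bookkeeping of the cancellation: verifying that the policy factors genuinely do not involve the summation variables $s_{1:t}$, and handling the time-$1$ boundary term and the correct indexing of $O$ (observation at $\tau+1$ depending on $S_{\tau+1}$ and $A_\tau$, as in \autoref{lem:markov}). The only genuine subtlety is well-definedness: conditioning on $A_t = a_t$ requires $\pi_t(a_t \mid h_t) > 0$, which is guaranteed under \textbf{(A2)}; once this holds, the conditional independence $A_t \perp S_t \mid H_t$ is immediate from the factorized joint law.
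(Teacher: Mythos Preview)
Your proposal is correct and follows essentially the same approach as the paper: the paper simply cites the standard POMDP result that the belief $\PR(S_t=\cdot\mid H_t=h_t)$ is policy-independent (you instead sketch the Bayes-rule cancellation directly), and for the second claim the paper writes exactly your expansion $\PR(z_{t+1}\mid h_t,a_t)=\sum_{s_t,s_{t+1},y_{t+1}}\PR(s_t\mid h_t)\,P(s_{t+1}\mid s_t,a_t)\,O(y_{t+1}\mid s_{t+1},a_t)\,\IND_{\{z_{t+1}=f(\sigma_t(h_t),y_{t+1},a_t)\}}$ and observes each factor is policy-free. Your explicit intermediate step $\PR(S_t=s\mid h_t,a_t)=\PR(S_t=s\mid h_t)$ is a welcome clarification that the paper leaves implicit.
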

\begin{proof}
    It is a standard result in the POMDP literature that the posterior belief $\PR(S_t = \cdot \mid H_t = h_t)$ does not depend on the policy. See, for example, \red{[\cite{KumarVaraiya:book} ADD Kumar Varaiya book]}. Now consider
    \begin{align*}
        \PR(z_{t+1} \mid h_t, a_t) = \smashoperator[r]{\sum_{s_t \in \ALPHABET S, s_{t+1} \in \ALPHABET S, y_{t+1} \in \ALPHABET Y}} \PR(s_t \mid h_t) P(s_{t+1} \mid s_t, a_t)  O(y_{t+1} \mid s_{t+1}, a_t) \IND_{\{ z_{t+1} = f(\sigma_t(h_t), y_{t+1}, a_t)\}}
    \end{align*}
    Since each term in the right hand side does not depend on the policy, we have that $\PR(Z_{t+1} = \cdot \mid H_t = h_t, A_t = a_t)$ is also policy independent. 
\end{proof}
\end{extra}

Our key observation is that for any $\F$, $( \sigma_t, P_\xi, r_\xi )$ is an $(\epsilon_t, \delta_t)_{t \ge 1}$ AIS-generator where
\begin{align*}
    \varepsilon_t &\coloneqq \max_{h_t \in \ALPHABET H_t, a_t \in \ALPHABET A}
    \bigl| \EXP[ r(S_t, A_t) \mid H_t = h_t, A_t = a_t] 
    - r_\xi(\sigma_t(h_t), a_t) 
    \bigr|, \\
    \delta_t &\coloneqq \max_{h_t \in \ALPHABET H_t, a_t \in \ALPHABET A}
    d_{\F}\bigl( \PR(Z_{t +1} = \cdot \mid H_t = h_t, A_t = a_t), 
            P_{\xi} (\cdot \mid \sigma_t(h_t), a_t) \bigr).
\end{align*}
Therefore, an immediate implication of \autoref{thm:ais} is the following.
\begin{theorem}\label{thm:bound}
    Let $\tilde \pi = (\tilde \pi_1, \tilde \pi_2, \dots)$ be a time-varying and history-dependent policy  given by $\tilde \pi_t(h_t) = \pi^\star_\xi(\sigma_t(h_t))$. Then, for any time~$t$ and any history $h_t \in \ALPHABET H_t$ and action $a_t \in \ALPHABET A$, we have:
    \begin{itemize}[leftmargin=1em, topsep=0pt, partopsep=0pt]
    \item \textbf{Bounds on value approximation:}
    \begin{align}
        \bigl| Q^\star_t(h_t, a_t) - Q^\star_\xi(\sigma_t(h_t), a_t) \bigr| &\le 
        (1-\gamma)^{-1}\bigl[ \bar \varepsilon_t + \gamma \bar \delta_t \rho_{\F}(V^\star_\xi) \bigr],
        \\
        \bigl| V^\star_t(h_t) - V^\star_\xi(\sigma_t(h_t)) \bigr| &\le 
        (1-\gamma)^{-1}\bigl[ \bar \varepsilon_t + \gamma \bar \delta_t \rho_{\F}(V^\star_\xi) \bigr],
    \end{align}
    where
    \(
        \bar \varepsilon_t = (1-\gamma) \sum_{\tau = t}^\infty \gamma^{\tau -t }\varepsilon_{\tau}
    \)
    and
    \(
        \bar \delta_t = (1-\gamma) \sum_{\tau = t}^\infty \gamma^{\tau -t }\delta_{\tau}.
    \)
    \item \textbf{Bounds on policy approximation:}
    \begin{equation} \label{eq:fixed-point-bound} 
        \bigl| V^\star_t(h_t) - V^{\tilde \pi}_t(h_t) \bigr| \le 
        2 (1-\gamma)^{-1}\bigl[  \bar \varepsilon_t +  \gamma \bar \delta_t \rho_{\F}(V^\star_\xi) \bigr].
    \end{equation}
    \end{itemize}
\end{theorem}
\begin{remark}\label{rem:upper}
    We may upper bound 
    \( 
    \bar \varepsilon_t
    \)
    by
    \(\bar \varepsilon^\circ_t \coloneqq \sup_{\tau \ge t} \varepsilon_\tau
    \)
    and
    \(
    \bar \delta_t
    \)
    by 
    \(
     \bar \delta^\circ_t \coloneqq \sup_{\tau \ge t} \delta_\tau
    \).
\end{remark}

The bound of \autoref{thm:bound} is \emph{instance dependent} because it depends on the value function $V^\star_\xi$. In the supplementary material, we illustrate \emph{instance independent} bounds by upper bounding $\rho_{\F}(V^\star_\xi)$ in terms of properties of the transition and reward functions.

\begin{extra}
It is possible to obtain an \emph{instance independent} upper bound on $2 \eta_{\xi,t}$ by upper bounding $\rho_{\F}(V^\star_\xi)$ as we show below.
\begin{proposition}
    For specific choice of $\F$, instance independent upper bounds on the approximation error are given as follows:
    \begin{enumerate}
    \item  When $\F = \F_{\TV}$, $\rho_{\F}(V^\star_\xi) \leq \SPAN(r) / (1-\gamma)$.
    \item  When $\F = \F_{\Was}$, $\rho_{\F}(V^\star_\xi) = \LIP(V^\star_\xi)$. If $r_{\xi}, P_{\xi}$ are Lipschitz functions with Lipschitz constants $\LIP(r_{\xi}), \LIP(P_{\xi})$ and $\gamma \LIP(P_{\xi}) < 1$, then $\LIP(V^\star_\xi) \leq \LIP(r_{\xi}) / (1-\gamma \LIP(P_{\xi}))$
    \end{enumerate}
\end{proposition}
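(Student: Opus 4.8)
The plan is to derive both bounds directly from the fixed-point equation~\eqref{eq:Q-xi} satisfied by $V^\star_\xi$ together with the Minkowski inequality~\eqref{eq:minkowski}, which converts an integral difference of $V^\star_\xi$ against two transition kernels into $\rho_{\F}(V^\star_\xi)$ times an IPM distance between those kernels. In both cases the recurring device is a one-sided Bellman comparison: if $a_1$ is the greedy action at a state $z_1$, then using $a_1$ as a (possibly suboptimal) action at a second state $z_2$ gives $V^\star_\xi(z_2) \ge r_\xi(z_2,a_1) + \gamma \sum_{z'} P_\xi(z'\mid z_2,a_1) V^\star_\xi(z')$, so that $V^\star_\xi(z_1) - V^\star_\xi(z_2)$ is upper bounded by a reward difference plus a discounted transition-kernel difference evaluated against $V^\star_\xi$. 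Each bound then reduces to solving a self-referential inequality in the relevant seminorm of $V^\star_\xi$.

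For part~(1), recall $\rho_{\TV}(V^\star_\xi) = \SPAN(V^\star_\xi)$. I would take $z_1$ and $z_2$ to be the maximizing and minimizing states of $V^\star_\xi$ (these exist by the finiteness Assumption~\textbf{(A1)}, which also guarantees $\SPAN(V^\star_\xi) < \infty$). Applying the one-sided comparison above, the reward difference is bounded by $\SPAN(r)$, since $r_\xi(z,a) = \sum_s r(s,a)\xi(s\mid z,a)$ is a state-conditional average of $r$ and hence has span no larger than $\SPAN(r)$. The transition term is bounded via~\eqref{eq:minkowski} with $\F = \F_{\TV}$ by $\SPAN(V^\star_\xi)\, d_{\TV}(P_\xi(\cdot\mid z_1,a_1), P_\xi(\cdot\mid z_2,a_1)) \le \SPAN(V^\star_\xi)$, using $d_{\TV}\le 1$. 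This yields $\SPAN(V^\star_\xi) \le \SPAN(r) + \gamma\,\SPAN(V^\star_\xi)$, and rearranging gives $\SPAN(V^\star_\xi) \le \SPAN(r)/(1-\gamma)$.

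For part~(2), the equality $\rho_{\Was}(V^\star_\xi) = \LIP(V^\star_\xi)$ is immediate from the definition of the Wasserstein IPM. To bound $\LIP(V^\star_\xi)$, I would run the one-sided comparison for arbitrary $z_1, z_2$: the reward difference is bounded by $\LIP(r_\xi)\,d(z_1,z_2)$, and the transition term by $\gamma\,\LIP(V^\star_\xi)\,d_{\Was}(P_\xi(\cdot\mid z_1,a_1), P_\xi(\cdot\mid z_2,a_1)) \le \gamma\,\LIP(V^\star_\xi)\,\LIP(P_\xi)\,d(z_1,z_2)$ via~\eqref{eq:minkowski} and the Lipschitz assumption on $P_\xi$. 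Symmetrizing in $z_1, z_2$ and taking the supremum over distinct pairs gives $\LIP(V^\star_\xi) \le \LIP(r_\xi) + \gamma\,\LIP(P_\xi)\,\LIP(V^\star_\xi)$, which rearranges to the claim under $\gamma\,\LIP(P_\xi) < 1$.

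The one point requiring care --- and the only genuine obstacle --- is that the rearrangement in part~(2) is legitimate only once $\LIP(V^\star_\xi)$ is known to be finite; otherwise the self-referential inequality is vacuous. I would handle this cleanly by working with the Bellman operator $\mathcal{T}$ rather than the fixed point directly: the same one-sided comparison shows $\LIP(\mathcal{T}V) \le \LIP(r_\xi) + \gamma\,\LIP(P_\xi)\,\LIP(V)$, so starting value iteration from a Lipschitz function (e.g.\ $V \equiv 0$) keeps the Lipschitz constants uniformly bounded by $\LIP(r_\xi)/(1-\gamma\,\LIP(P_\xi))$; since $\mathcal{T}$ is a sup-norm $\gamma$-contraction the iterates converge to $V^\star_\xi$, and the Lipschitz seminorm is lower semicontinuous under this convergence, so the limit inherits the same bound. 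The analogous operator-level inequality $\SPAN(\mathcal{T}V) \le \SPAN(r) + \gamma\,\SPAN(V)$ also gives a self-contained proof of part~(1) that does not rely on evaluating $V^\star_\xi$ at its extremal states.
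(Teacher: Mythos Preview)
Your argument is correct. The paper itself does not supply a proof of this proposition: in the appendix it simply restates the two bounds and attributes them to \cite{subramanian2022approximate}. Your self-referential Bellman-comparison argument is the standard route and is essentially what underlies the cited result, so there is no meaningful methodological difference to discuss. One small remark: for part~(1) an even quicker route is available, since $r_{\textup{MIN}}/(1-\gamma) \le V^\star_\xi(z) \le r_{\textup{MAX}}/(1-\gamma)$ immediately gives $\SPAN(V^\star_\xi) \le \SPAN(r)/(1-\gamma)$ without invoking~\eqref{eq:minkowski}; your approach is of course also fine and has the virtue of being structurally parallel to part~(2). Your handling of the finiteness issue in part~(2) via the operator-level inequality and value-iteration limit is careful and correct.
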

\end{extra}

\section{RQL with AIS losses}

The results of \autoref{thm:bound} suggest that the performance of RQL could be enhanced by improving the representation function~$f$ so as to minimize the approximation losses $\varepsilon = (\varepsilon_1, \varepsilon_2, \dots)$ and $\delta = (\delta_1, \delta_2, \dots)$. A similar idea was proposed in \cite{subramanian2022approximate} to improve the performance of actor-critic algorithms. In this section, we verify that adding such \emph{AIS losses} improves the performance of RQL. 

\subsection{Adding AIS losses to RQL}
The key idea is to model each component of the AIS generator as a parametric family of functions/distribution and then use stochastic gradient descent to update these parameters. Note that in RQL, we use a RNN to model the state update function $f$. As explained in \cite[Proposition 8]{subramanian2022approximate}, in this case we can use an observation predictor $\tilde P^y \colon \ALPHABET Z \times \ALPHABET A \to \Delta(\ALPHABET Y)$ as a proxy for the state predictor $\tilde P$ and replace $\delta_t$ by $\tilde \delta_t/\bar \kappa_{\F}(f)$, where
\[
    \tilde \delta_t \coloneqq \max_{h_t \in \ALPHABET H_t, a_t \in \ALPHABET A}
    d_{\F}\bigl( \PR(Y_{t +1} = \cdot \mid H_t = h_t, A_t = a_t), 
            \tilde P^y(\cdot \mid \sigma_t(h_t), a_t) \bigr).
\]
and $\bar \kappa_{\F}(f) = \sup_{z \in \ALPHABET Z,a \in \ALPHABET A} \kappa_{\F}( f( z, \cdot, a))$. 

Let $\psi$ denote the combined parameters of the AIS-generator. Then, we could choose 
the AIS loss function $\mathcal{L}_{\text{AIS}}$ as any monotonic function of $(\varepsilon_t, \delta_t)$ because reducing $(\varepsilon_t,\delta_t)$ reduces the upper bound of \autoref{thm:bound}. As suggested in \cite{subramanian2022approximate}, we choose the AIS loss as
\[
    \mathcal{L}_{\text{AIS}}(\psi) = \frac{1}{T} \sum_{t=1}^T (\lambda \varepsilon^2 + (1-\lambda) \tilde \delta^2)
\]
where $\lambda$ is a hyper-parameter and $T$ is the batch size. A detailed discussion of the choice of IPM is presented in \cite{subramanian2022approximate}. 
We choose $d_{\F}$ as $\ell_2$-distance based MMD \cite[see][Proposition 32]{subramanian2022approximate} for which case  the AIS loss can be simplified as~\cite[Proposition 33]{subramanian2022approximate}:
\[
    \mathcal{L}_{\text{AIS}}(\psi) = \frac{1}{T} \sum_{t=1}^T \Bigl[ \lambda \bigl| R_t - \tilde r(Z_t, A_t) \bigr|^2  + (1-\lambda)( M^y_t - 2Y_t)^\intercal M^y_t  \Bigr]+ \text{const}(\psi)
\]
where $M^y_t$ is the mean of the distribution $\tilde P^y(\cdot \mid Z_t, A_t)$ and $\text{const}(\psi)$ are terms which do not depend on $\psi$ and can therefore be ignored.  Then, we update the parameters $\psi$ of the AIS generator using stochastic gradient descent.

\begin{figure}[tb] 
    \centering
    \includegraphics[width=\textwidth]{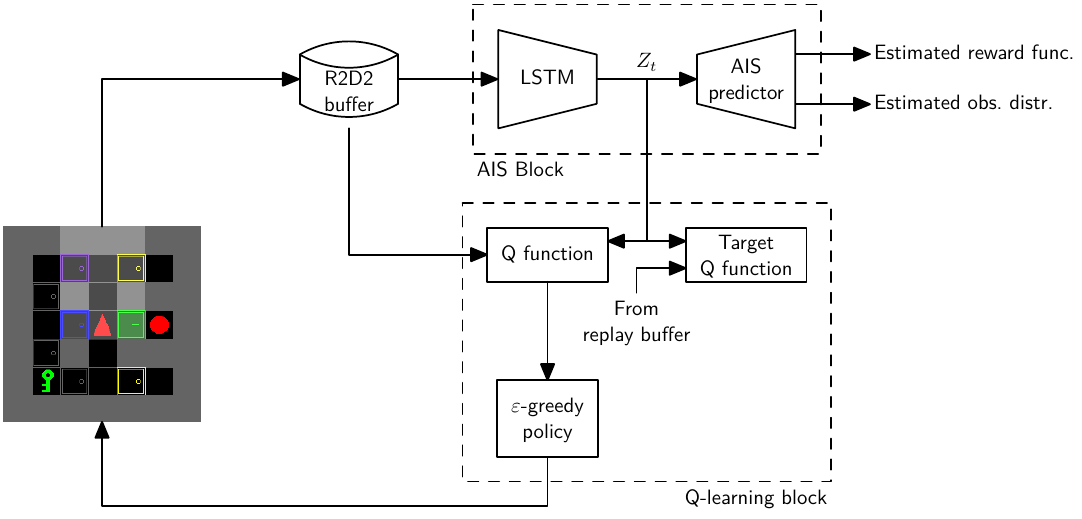}
    \caption{Network architecture of RQL with AIS losses}
    \label{fig:diagram}
\end{figure}

The updates of the representation using AIS losses is carried out in parallel with RQL. For our experiments, we choose R2D2~\cite{kapturowski2018recurrent}, which generalizes Double Q-learning (DQL) with replay buffers to RNNs~\cite{https://doi.org/10.48550/arxiv.1509.06461}. 
Note that as in \cite{subramanian2022approximate}, we do not backpropagate the Q-learning losses to the AIS generator. A block diagram showing the network architecture is shown in \autoref{fig:diagram}. The complete implementation details are presented in the supplementary material. 

\subsection{Empirical evaluation}

\begin{wraptable}{r}{0.45\textwidth}
\vskip -3\baselineskip
\caption{Comparison of RQL-AIS and ND-R2D2 on the MiniGrid benchmark}
\label{tab:comparison}
\includegraphics{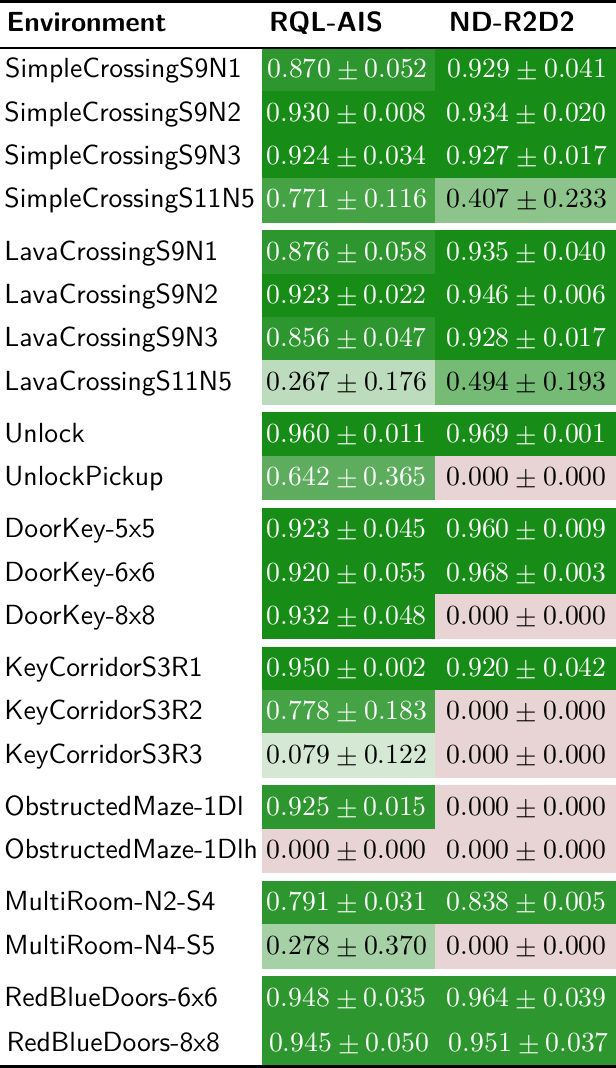}
\vskip -2\baselineskip
\end{wraptable}

In this section, 
we compare the performance of \texttt{RQL-AIS} (the algorithm described in the previous section) with a non-distributed\footnote{In~\cite{kapturowski2018recurrent}, the distributed setup is introduced to allow for parallel data collection and training in environments requiring very high number of interactions. This is not the case in our setup and both the RQL-AIS and R2D2 are implemented in a non-distributed manner.} variant of R2D2 proposed in~\cite{kapturowski2018recurrent}, which we label as \texttt{ND-R2D2}. The exact implementation details, including the choice of hyper-parameters are presented in the supplementary material. We want to emphasize that the two implementations are identical except that \texttt{RQL-AIS} includes the AIS loss block and updates the parameters $\psi$ of the representation by backpropogating the AIS loss $\mathcal{L}_{\text{AIS}}(\psi)$ rather than backpropogating the Q-learning losses.

We evaluate the two algorithms on 22 environments from the MiniGrid benchmark \cite{minigrid}, which are partially observed MiniGrid environments with tasks with increasing complexity, described in detail in the supplementary material.  In all environments, the layout at the start of each episode is randomly chosen, so the agent cannot solve the task by memorizing an exact sequence of actions.  Similar to \cite{subramanian2022approximate,ha2018worldmodels}, before running the experiments, we train an autoencoder on a dataset of random agent observations to compress the observations of the agent into compact vectors. The weights of the autoencoder are kept frozen during the RL experiments.

\begin{figure*}[b] 
\centering
\includegraphics[width=\textwidth]{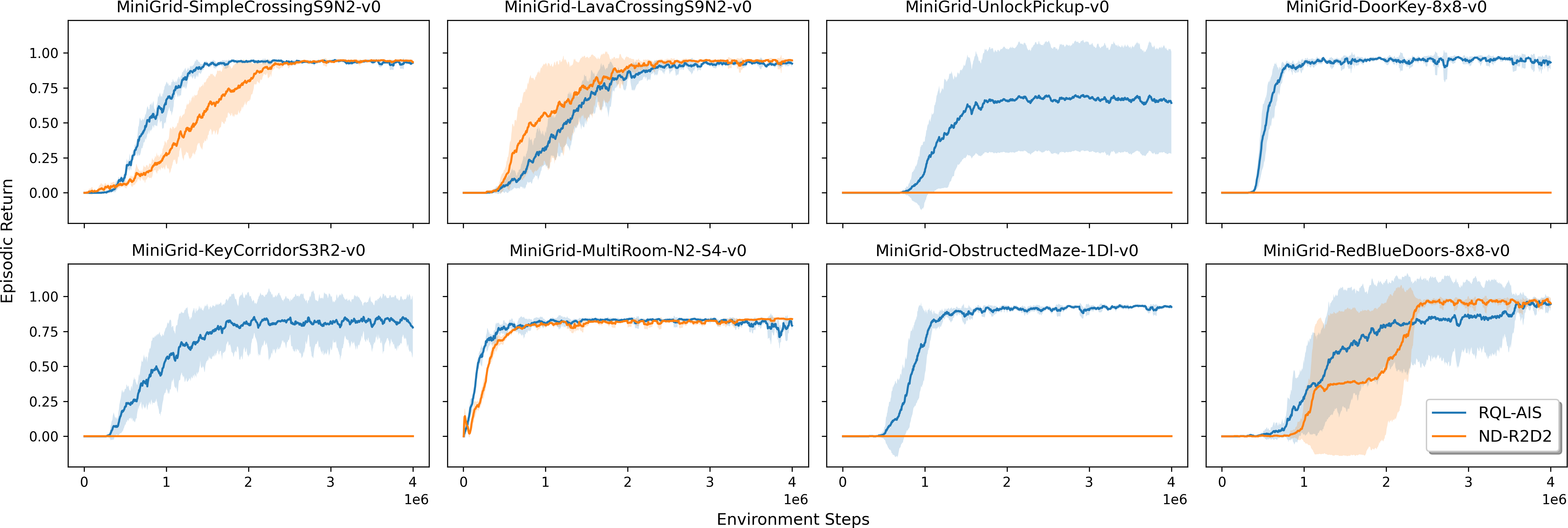}
\caption{Results from 8 selected MiniGrid environments. RQL-AIS successfully solves all 8 while ND-R2D2 fails at solving 4 environments. This demonstrates the effectiveness of AIS state representations in empirical settings.}
\label{fig:performance}
\end{figure*}

We train both \texttt{RQL-AIS} and \texttt{ND-R2D2} for $T = 4 \cdot 10^{6}$ environment steps for $N=5$ seeds. 
During the data gathering phase, the agent behaves according to the epsilon-greedy approach \cite{mnih2013atari} to allow for exploration of the environment with epsilon value exponentially decreasing over time. We run two set of studies: one with uniform sampling from the replay buffer and the other using prioritized  experience replay (PER) \cite{https://doi.org/10.48550/arxiv.1511.05952}. We report the results for uniform sampling here and the results for PER in the supplementary material.
 
The mean and standard deviation of the final performance for both algorithms is shown in \autoref{tab:comparison}. \autoref{fig:performance} shows the training curves for 8 representative environments. Training curves for all environments are included in the supplementary material. 

\paragraph{Discussion of the results} For the simpler environments, both \texttt{RQL-AIS} and \texttt{ND-R2D2} learn to solve the task, with \texttt{ND-R2D2} performing slightly better. However, there are seven environments where \texttt{ND-R2D2} fails to learn. This is not surprising as the MiniGrid environments are sparse reward environments 
which are used as a benchmark for research on exploration methods in RL \cite{parisi2021interesting,pmlr-v162-mavor-parker22a,zhang2021bebold,jiang2021prioritized,alshedivat2018complexity,goyal2018transfer}. \texttt{ND-R2D2}, which does not include any exploration bonuses, fails to learn in some of the larger environments. What is surprising is that \texttt{RQL-AIS} is still able to learn in such environments without any exploration bonuses. We show in the supplementary material that adding prioritized experience replay boosts the performance of \texttt{RQL-AIS} in the harder environments.

\paragraph{What is the impact of AIS losses on learning?} 
To understand the impact of AIS losses on learning, we compare the evolution of the MMD distance and the episodic return with the number of environment steps. The results are shown in \autoref{fig:alblation}, where the environment steps are plotted on a log scale. In each environment, when the MMD loss is initially high, the episodic return does not improve considerably. However, when the MMD distance loss becomes smaller ($\approx 0.5$), the episodic return starts improving. This suggests that it takes some time for the agent to learn a good representation through the AIS losses, and once a good representation has been obtained (small MMD loss), RL losses through Q-learning are much more effective in training policies, thus allowing policies to improve much quicker, i.e., with fewer samples. This makes sense because optimizing the AIS losses effectively gives the same representation to several different possible histories (but identical from the perspective of performance), which then allows Q-learning updates to propagate through all these possible histories instead of just a single history trajectory. The ability to map more histories to fewer representations accurately improves with the quality of approximation of the AIS. Thus, AIS losses not only help in establishing theoretical performance bounds, but it is evident that they help in learning in empirical experiments.

\begin{figure*}[tb] 
\centering
\includegraphics[width=\textwidth]{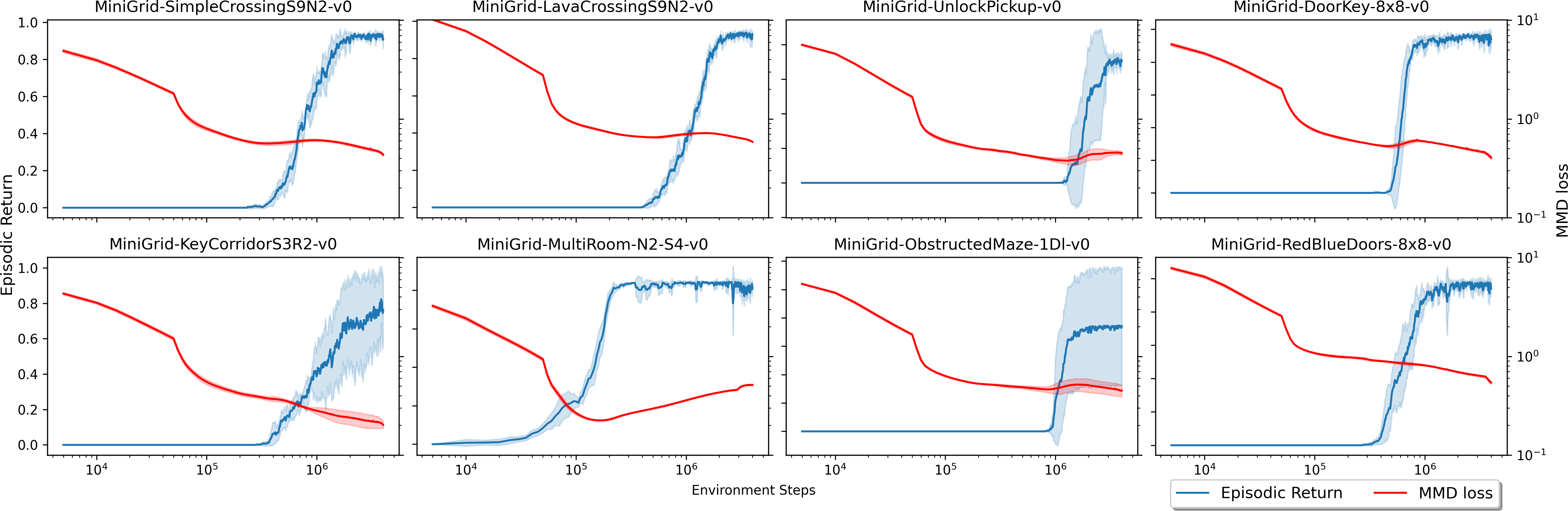}
\caption{Episodic Return (left-axis) and MMD loss (right-axis, on log scale) plots for the RQL-AIS method on 8 selected MiniGrid environments. There is a close correlation between the drop in MMD loss value and improvement in episodic return. Both Environment Steps and MMD loss are in the logarithmic scale. }
\label{fig:alblation}
\end{figure*}

\section{Conclusion}
In this work, we establish the convergence of recurrent Q-learning (RQL) in the tabular setting for POMDPs using a representation of the history called an approximate information state (AIS). We also establish upper bounds on the degree of sub-optimality of the converged solution. These bounds quantify the relationship between the quality of representation and the quality of the converged solution of RQL. Based on these bounds, a variant of RQL called RQL-AIS was proposed and it was observed that RQL-AIS performs better than the state-of-the-art baseline ND-R2D2 on the MiniGrid benchmark. A detailed comparison of the time evolution of AIS losses and performance strongly suggests that the improvement in performance is correlated to the decrease in AIS losses. In conclusion, the results of this paper show that AIS is a useful theoretical tool for analyzing the performance of RQL in POMDPs and also an effective block for algorithm design for POMDPs.

Our theoretical analysis is restricted to the tabular setting. Interesting future work includes generalizing the analysis to more general models, including using function approximation for approximating the Q-function. Another interesting avenue is formally showing convergence of an algorithm which learns the AIS and Q-function in parallel with two time-scale methods.


\section*{Acknowledgements}

This work was supported in part by Natural Science and Engineering Council of Canada through Discovery Grant RGPIN-2021-03511 and Alliance International Catalyst Grant ALLRP 571054-21. The numerical experiments were enabled in part by  compute resources provided by the Digital Research Alliance of Canada.

\printbibliography

\clearpage

\appendix
\section{Proof of claims in Theorem 1}

The high-level idea of the proof is the same as that \cite{subramanian2022approximate} [Theorem 27] and relies on the following two observations:
\begin{enumerate}[leftmargin=1em]
    \item  \textbf{Approximating the infinite horizon system by a finite horizon system.}
    First note that the finiteness of the state and action spaces implies that the per-step reward is uniformly bounded. Define $r_{\text{MIN}} = \min_{s \in \ALPHABET S, a \in \ALPHABET A}r(s,a)$ and $r_{\text{MAX}} = \max_{s \in \ALPHABET S, a \in \ALPHABET A} r(s,a)$. Then,  $R_t \in [r_{\textup{MIN}}, r_{\textup{MAX}}]$.  

    Now consider a finite horizon system that runs for horizon $T$. Let $V^\star_{t,T}$ and $V^{\tilde \pi}_{t,T}$ denote the optimal finite-horizon value function and the finite-horizon value function corresponding to policy $\tilde \pi$ (defined in \autoref{thm:ais}), respectively. Moreover, let $Q^\star_{t,T}$ denote the optimal finite-horizon Q-function. More specifically, for any $t \le T$, and any history $h_t \in \ALPHABET H_t$ and action $a_t \in \ALPHABET A$, we have 
    \begin{align*}
        V^{\star}_{t,T}(h_t) &\coloneqq \sup_{\pi} \EXP^{\pi} \left[ \medop\sum_{\tau=t}^{T-1} \gamma^{\tau-t} r(S_{\tau}, A_{\tau}) \Bigm| H_t = h_t \right],\\
        V^{\tilde \pi}_{t,T}(h_t) &\coloneqq \EXP^{\tilde \pi} \left[ \medop\sum_{\tau=t}^{T-1} \gamma^{\tau-t} r(S_{\tau}, A_{\tau}) \Bigm| H_t = h_t \right],\\
        Q^{\star}_{t,T}(h_t, a_t) &\coloneqq \sup_{\pi} \EXP^{\pi} \left[  \medop\sum_{\tau=t}^{T-1} \gamma^{\tau-t} r(S_{\tau}, A_{\tau}) \bigm| H_t = h_t, A_t = a_t \right].
    \end{align*}

    Then, the boundedness of the per-step reward implies the following:
    \begin{proposition} \label{prop:finite-inf-sandwich-bounds}
        For any $T$, any $t \le T$, and any history $h_t \in \ALPHABET H_t$ and action $a_t \in \ALPHABET A$, we have the following:
        \begin{alignat}{4}
           Q^\star_{t,T}(h_t, a_t) + \frac{\gamma^{T-t}}{1-\gamma} r_{\textup{MIN}}
           & {} \le {} & Q^\star_t(h_t, a_t) & {} \le {}
           Q^\star_{t,T}(h_t, a_t) + \frac{\gamma^{T-t}}{1-\gamma} r_{\textup{MAX}},
           \label{eq:finite_inf_horizon_sandwich_bound_Q} \\
           V^\star_{t,T}(h_t) + \frac{\gamma^{T-t}}{1-\gamma} r_{\textup{MIN}}
           & {} \le {} & V^\star_t(h_t) & {} \le {}
           V^\star_{t,T}(h_t) + \frac{\gamma^{T-t}}{1-\gamma} r_{\textup{MAX}},
           \label{eq:finite_inf_horizon_sandwich_bound_V} \\
           V^{\tilde \pi}_{t,T}(h_t) + \frac{\gamma^{T-t}}{1-\gamma} r_{\textup{MIN}}
           & {} \le {} & V^{\tilde \pi}_t(h_t) & {} \le {}
           V^{\tilde \pi}_{t,T}(h_t) + \frac{\gamma^{T-t}}{1-\gamma} r_{\textup{MAX}}. \label{eq:finite_inf_horizon_sandwich_bound_pol_bound}
        \end{alignat}
    \end{proposition}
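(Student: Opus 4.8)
The plan is to prove all three sandwich bounds by the same mechanism: split the infinite-horizon discounted return at time $T$ into a finite-horizon head and an infinite tail, and then bound the tail alone using the uniform reward bounds $R_\tau \in [r_{\textup{MIN}}, r_{\textup{MAX}}]$ established just above the proposition. The point is that the tail bound is \emph{deterministic}, i.e.\ independent of both the trajectory and the policy, which lets it be pulled out of every expectation and every supremum.

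First I would record the pathwise decomposition: for any policy $\pi$, conditioned on $H_t = h_t$, one may write the return as a head plus a tail,
\[
   \sum_{\tau=t}^{\infty}\gamma^{\tau-t} r(S_\tau,A_\tau)
   = \sum_{\tau=t}^{T-1}\gamma^{\tau-t} r(S_\tau,A_\tau)
   + \sum_{\tau=T}^{\infty}\gamma^{\tau-t} r(S_\tau,A_\tau).
\]
Since $r(S_\tau,A_\tau)\in[r_{\textup{MIN}},r_{\textup{MAX}}]$ almost surely and $\sum_{\tau=T}^\infty \gamma^{\tau-t} = \gamma^{T-t}/(1-\gamma)$, the tail is sandwiched by the two deterministic constants $\tfrac{\gamma^{T-t}}{1-\gamma} r_{\textup{MIN}}$ and $\tfrac{\gamma^{T-t}}{1-\gamma} r_{\textup{MAX}}$. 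For the fixed-policy statement~\eqref{eq:finite_inf_horizon_sandwich_bound_pol_bound} this essentially finishes the argument: taking $\EXP^{\tilde\pi}[\,\cdot\mid H_t=h_t]$ of the decomposition gives $V^{\tilde\pi}_t(h_t) = V^{\tilde\pi}_{t,T}(h_t) + \EXP^{\tilde\pi}[\text{tail}\mid H_t=h_t]$, and the deterministic tail bound applied inside the expectation yields the claim.

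For the optimal bounds~\eqref{eq:finite_inf_horizon_sandwich_bound_V} and~\eqref{eq:finite_inf_horizon_sandwich_bound_Q} I would proceed by establishing the two inequalities separately. For the upper bound, for every $\pi$ I have $\EXP^\pi[\text{head}] \le V^\star_{t,T}(h_t)$ by definition of the finite-horizon optimum, and $\EXP^\pi[\text{tail}] \le \tfrac{\gamma^{T-t}}{1-\gamma} r_{\textup{MAX}}$, so $\EXP^\pi[\text{return}] \le V^\star_{t,T}(h_t) + \tfrac{\gamma^{T-t}}{1-\gamma} r_{\textup{MAX}}$; taking the supremum over $\pi$ gives the right inequality. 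For the lower bound I would instead keep the head inside the supremum: since the tail is bounded below by the constant $\tfrac{\gamma^{T-t}}{1-\gamma} r_{\textup{MIN}}$ for every $\pi$, I have $V^\star_t(h_t) = \sup_\pi \EXP^\pi[\text{return}] \ge \sup_\pi\bigl(\EXP^\pi[\text{head}]\bigr) + \tfrac{\gamma^{T-t}}{1-\gamma} r_{\textup{MIN}} = V^\star_{t,T}(h_t) + \tfrac{\gamma^{T-t}}{1-\gamma} r_{\textup{MIN}}$. The $Q$-function case~\eqref{eq:finite_inf_horizon_sandwich_bound_Q} is verbatim the same argument, additionally conditioning on $A_t=a_t$ throughout.

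The only subtle step, and the one I would flag as the main obstacle, is the interchange of the policy supremum with the head/tail split in the optimal cases, since in general $\sup_\pi(X_\pi + Y_\pi)$ is neither $\sup_\pi X_\pi + \sup_\pi Y_\pi$ nor $\sup_\pi X_\pi$ plus a constant. This is resolved precisely because the tail is controlled by a policy-independent constant on both sides: adding a fixed constant commutes with the supremum, so the head can be optimized on its own while the tail contributes only its uniform bound. Everything else (absolute convergence of the discounted series, and the fact that optimizing the finite head over all policies coincides with the finite-horizon optimum because the head depends only on actions before time $T$) is routine given the boundedness of the per-step reward under assumption \textbf{(A1)}.
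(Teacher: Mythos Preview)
Your proposal is correct and follows essentially the same route as the paper's proof: split the discounted return at time $T$, bound the tail uniformly using $r_{\textup{MIN}}$ and $r_{\textup{MAX}}$, and exploit the fact that this tail bound is a policy-independent constant so that it passes through both the expectation and the supremum. If anything, you are more explicit than the paper about why the supremum interchanges with the head/tail decomposition, which is a welcome clarification.
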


    \item \textbf{Approximating the finite-horizon system with the approximate AIS model.}
        Given the AIS reward $\tilde r$ and dynamics $\tilde P$, define a Bellman operator $\tilde{\mathcal{B}}_V \colon \ALPHABET Z \to \ALPHABET Z$ as follows. For any $\tilde V \colon \ALPHABET Z \to \reals$, 
        \[
            \tilde{\mathcal{B}}_V \tilde V(z) = \max_{a \in \ALPHABET A} \left\{ \tilde r(z, a) + \gamma \int_{\ALPHABET Z} \tilde V(z') \tilde P(dz' \mid z, a)\right\},
        \]
        and define a Bellman operator $\tilde{\mathcal{B}}_Q \colon \ALPHABET Z \times \ALPHABET A \to \ALPHABET Z \times \ALPHABET A$ as follows. For any $\tilde Q \colon \ALPHABET Z \times \ALPHABET A \to \reals$,
        \begin{align*}
            \tilde V(z) &= \max_{a \in \ALPHABET A} \tilde Q(z, a) \\
            \tilde{\mathcal{B}}_Q \tilde Q(z, a) &= \tilde r(z, a) + \gamma \int_{\ALPHABET Z} \max_{a' \in \ALPHABET A} \tilde Q(z', a') \tilde P(dz' \mid z, a)\\
             &= \tilde r(z, a) + \gamma \int_{\ALPHABET Z} \tilde V(z') \tilde P(dz' \mid z, a).
        \end{align*}
        Thus the two Bellman operators $\tilde{\mathcal{B}}_V$ and $\tilde{\mathcal{B}}_Q$ are related as follows
        \begin{align*}
            \tilde{\mathcal{B}}_V \tilde V(z) = \max_{a \in \ALPHABET A} \tilde{\mathcal{B}}_Q \tilde Q(z, a).
        \end{align*}
        
    Now, define initial value functions $\tilde V^0(z) = 0$ and $\tilde Q^0(z, a) = 0$ for all $z$ and $a$. Recursively define $\tilde V^{n+1} = \tilde {\mathcal{B}}_V \tilde V^n$ and $\tilde Q^{n+1} = \tilde {\mathcal{B}}_Q \tilde Q^n$.  Then, we have the following.
    \begin{proposition} \label{prop:finite-horizon-bounds}
    Arbitrarily fix the function space $\F$.  For any $T$ and $t \le T$, define
    \[
    \eta_{t,T} = \sum_{\tau=t}^{T-1} \gamma^{\tau-t} \varepsilon_{\tau} + \sum_{\tau=t+1}^{T-1} \gamma^{\tau-t} \rho_{\mathfrak{F}}(\tilde V^{T-\tau}) \delta_{\tau-1}.
    \]
    Then, we have the following bounds:
    \begin{align}
        \lvert Q^{\star}_{t,T}(h_{t}, a_{t}) - \tilde Q^{T-t}(\sigma_{t}(h_{t}), a_{t}) &\rvert \leq \eta_{t,T}, \label{eq:finite_horizon_AIS_bound_Q} \\
        \lvert V^{\star}_{t,T}(h_{t}) - \tilde V^{T-t}(\sigma_{t}(h_{t})) &\rvert \leq \eta_{t,T}, \label{eq:finite_horizon_AIS_bound_V} \\
        \lvert V^{\tilde \pi}_{t,T}(h_{t}) - \tilde V^{T-t}(\sigma_{t}(h_{t})) &\rvert \leq \eta_{t,T}. \label{eq:finite_horizon_AIS_bound_optpol}
    \end{align}
    \end{proposition}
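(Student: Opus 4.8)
The plan is to prove all three bounds \eqref{eq:finite_horizon_AIS_bound_Q}--\eqref{eq:finite_horizon_AIS_bound_optpol} by backward induction on $t$, running from $t=T$ down to $t=1$; equivalently, by forward induction on the number of remaining steps $n=T-t$, which is exactly the index of the value-iteration iterates $\tilde V^{n}$ and $\tilde Q^{n}$. The statements are carried together so that the value-function bound at stage $t+1$ feeds the $Q$-function bound at stage $t$. The base case is $t=T$ (i.e.\ $n=0$): here $Q^\star_{T,T}$, $V^\star_{T,T}$, and $V^{\tilde\pi}_{T,T}$ are all zero (empty sums), $\tilde Q^{0}=\tilde V^{0}=0$ by definition, and $\eta_{T,T}=0$, so every bound holds trivially.

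For the inductive step I would assume \eqref{eq:finite_horizon_AIS_bound_V} at stage $t+1$, i.e.\ $|V^\star_{t+1,T}(h_{t+1})-\tilde V^{T-t-1}(\sigma_{t+1}(h_{t+1}))|\le\eta_{t+1,T}$ for every $h_{t+1}$, and write the two Bellman recursions at stage $t$. The true one is
\[
Q^\star_{t,T}(h_t,a_t)=\EXP[R_t\mid H_t=h_t,A_t=a_t]+\gamma\,\EXP[V^\star_{t+1,T}(H_{t+1})\mid H_t=h_t,A_t=a_t],
\]
and the AIS one is $\tilde Q^{T-t}(z,a)=\tilde{\mathcal B}_Q\tilde Q^{T-t-1}(z,a)=\tilde r(z,a)+\gamma\int_{\ALPHABET Z}\tilde V^{T-t-1}(z')\,\tilde P(dz'\mid z,a)$, evaluated at $z=\sigma_t(h_t)$. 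Subtracting and splitting into a reward part and a continuation part, the reward part $|\EXP[R_t\mid h_t,a_t]-\tilde r(\sigma_t(h_t),a_t)|$ is at most $\varepsilon_t$ directly from the AIS-generator definition.

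The continuation part is where the key maneuver lies: I insert the intermediate law $\PR(Z_{t+1}=\cdot\mid H_t=h_t,A_t=a_t)$, writing $\EXP[\tilde V^{T-t-1}(Z_{t+1})\mid h_t,a_t]=\int_{\ALPHABET Z}\tilde V^{T-t-1}(z')\,\PR(Z_{t+1}=dz'\mid h_t,a_t)$ with $Z_{t+1}=\sigma_{t+1}(H_{t+1})$. This splits the continuation difference into (a)~$\EXP[V^\star_{t+1,T}(H_{t+1})-\tilde V^{T-t-1}(\sigma_{t+1}(H_{t+1}))\mid h_t,a_t]$, bounded in absolute value by $\eta_{t+1,T}$ using the induction hypothesis pointwise under the expectation, and (b)~the difference of $\tilde V^{T-t-1}$ integrated against $\PR(Z_{t+1}=\cdot\mid h_t,a_t)$ versus against $\tilde P(\cdot\mid\sigma_t(h_t),a_t)$. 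Term (b) is precisely the setting of the Minkowski/IPM inequality \eqref{eq:minkowski}, giving $\rho_{\F}(\tilde V^{T-t-1})\,d_{\F}(\PR(Z_{t+1}=\cdot\mid h_t,a_t),\tilde P(\cdot\mid\sigma_t(h_t),a_t))\le\rho_{\F}(\tilde V^{T-t-1})\,\delta_t$. Collecting terms gives $|Q^\star_{t,T}(h_t,a_t)-\tilde Q^{T-t}(\sigma_t(h_t),a_t)|\le\varepsilon_t+\gamma\eta_{t+1,T}+\gamma\rho_{\F}(\tilde V^{T-t-1})\delta_t$, and a short index check shows this equals $\eta_{t,T}$ (the new $\varepsilon_t$ is the $\tau=t$ term of the first sum, $\gamma\rho_{\F}(\tilde V^{T-t-1})\delta_t$ is the $\tau=t+1$ term of the second sum, and $\gamma\eta_{t+1,T}$ supplies the remaining shifted terms), establishing \eqref{eq:finite_horizon_AIS_bound_Q}. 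The value bound \eqref{eq:finite_horizon_AIS_bound_V} then follows at stage $t$ from $|\max_a f(a)-\max_a g(a)|\le\max_a|f(a)-g(a)|$ with $f=Q^\star_{t,T}(h_t,\cdot)$ and $g=\tilde Q^{T-t}(\sigma_t(h_t),\cdot)$, which closes the induction. The policy bound \eqref{eq:finite_horizon_AIS_bound_optpol} is obtained by the same induction run on the recursion for $V^{\tilde\pi}_{t,T}$, which is identical except that the maximization over actions is replaced by evaluation at the action $a^\star=\tilde\pi_t(h_t)$; the same three error terms appear, and one invokes the greedy property of $\tilde\pi$ to relate $\tilde Q^{T-t}(\sigma_t(h_t),a^\star)$ to $\tilde V^{T-t}(\sigma_t(h_t))$.

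The main obstacle, and the step I would be most careful about, is the insertion of the intermediate AIS law in the continuation term: the inequality \eqref{eq:minkowski} applies only once the integrand is a fixed function of the next AIS state $z'$ integrated against two competing distributions over $\ALPHABET Z$, so one must first use the induction hypothesis to pass from $V^\star_{t+1,T}$ (a function of the full history $h_{t+1}$) to $\tilde V^{T-t-1}\circ\sigma_{t+1}$ before the transition error $\delta_t$ can be invoked. Getting the measurability and conditioning right there, together with the exact index bookkeeping that makes the per-stage contributions telescope into $\eta_{t,T}$, and the alignment of the greedy action with the maximizer of $\tilde Q^{T-t}$ in the policy bound, is the only delicate part; everything else is the standard contraction-style argument.
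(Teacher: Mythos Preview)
Your argument for \eqref{eq:finite_horizon_AIS_bound_Q} and \eqref{eq:finite_horizon_AIS_bound_V} matches the paper's: backward induction from $t=T$, with the triangle-inequality split into the reward error, the induction-hypothesis term, and the transition error controlled by the IPM inequality~\eqref{eq:minkowski}, followed by $|\max_a f(a)-\max_a g(a)|\le\max_a|f(a)-g(a)|$ for the value bound.

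For the policy bound \eqref{eq:finite_horizon_AIS_bound_optpol} there is a gap in what you wrote. The policy $\tilde\pi$ is greedy with respect to the \emph{infinite-horizon} fixed point $\tilde Q^\star$ (see the definition of $\tilde\pi^\star$ after~\eqref{eq:ADP} and the statement of Theorem~\ref{thm:ais}), not with respect to the finite-horizon iterate $\tilde Q^{T-t}$. Consequently the action $a^\star=\tilde\pi_t(h_t)=\arg\max_a\tilde Q^\star(\sigma_t(h_t),a)$ need not equal $\arg\max_a\tilde Q^{T-t}(\sigma_t(h_t),a)$, and your appeal to the ``greedy property'' does not deliver $\tilde Q^{T-t}(\sigma_t(h_t),a^\star)=\tilde V^{T-t}(\sigma_t(h_t))$. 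The paper handles this step differently: after performing the same three-term triangle-inequality decomposition, it upper-bounds each resulting term by its maximum over all actions $a\in\ALPHABET A$, thereby absorbing the action mismatch between $V^{\tilde\pi}_{t,T}$ (evaluated at $a^\star$) and $\tilde V^{T-t}$ (evaluated at its own maximizer) and still arriving at the per-step recursion $\varepsilon_t+\gamma\,\eta_{t+1,T}+\gamma\,\rho_{\F}(\tilde V^{T-t-1})\,\delta_t=\eta_{t,T}$.
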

    
\end{enumerate}

\subsection{Proof of Proposition~\ref{prop:finite-inf-sandwich-bounds}}
Consider the optimal value functions in the original model for the infinite horizon case, i.e, $V^{\star}_t(h_t)$ and $Q^{\star}_t(h_t, a_t)$.
\begin{align*}
    V^{\star}_t(h_t) &= \sup_{\pi} \EXP \left[ \sum_{\tau=t}^{\infty} \gamma^{\tau-t} R_{\tau} \,\middle|\, H_t = h_t \right] \\
    &\geq \sup_{\pi} \EXP \left[ \sum_{\tau=t}^{T-1} \gamma^{\tau-t} R_{\tau} + \sum_{\tau=T}^{\infty} \gamma^{\tau-t} r_{\textup{MIN}} \,\middle|\, H_t = h_t \right] \\
    &= \sup_{\pi} \EXP \left[ \sum_{\tau=t}^{T-1} R_{\tau} \,\middle|\, H_t = h_t \right] + \frac{\gamma^{T-t}}{1-\gamma} r_{\textup{MIN}} \\
    &= V^{\star}_{t,T}(h_t) + \frac{\gamma^{T-t}}{1-\gamma} r_{\textup{MIN}}.
\end{align*}
Similarly, it can be shown by reversing the inequality and using $r_{\textup{MAX}}$ instead of $r_{\textup{MIN}}$ that
\begin{align*}
    V^{\star}_t(h_t) \leq V^{\star}_{t,T}(h_t) + \frac{\gamma^{T-t}}{1-\gamma} r_{\textup{MAX}}.
\end{align*}

Thus, we have the result
\begin{align*} 
   V^{\star}_{t,T}(h_t) + \frac{\gamma^{T-t}}{1-\gamma} r_{\textup{MIN}} \leq V^{\star}_t(h_t) \leq V^{\star}_{t,T}(h_t) + \frac{\gamma^{T-t}}{1-\gamma} r_{\textup{MAX}}.
\end{align*}

We can obtain a similar bound for $Q^\star_t$. In particular,  we have
\begin{align*}
    Q^{\star}_t(h_t, a_t) &= \sup_{\pi} \EXP \left[ r(h_t, a_t) + \sum_{\tau=t+1}^{\infty} \gamma^{\tau-t} R_{\tau} \,\middle|\, H_t = h_t, A_t = a_t \right] \\
    &\geq \sup_{\pi} \EXP \left[ r(h_t, a_t) + \sum_{\tau=t+1}^{T-1} \gamma^{\tau-t} R_{\tau} + \sum_{\tau=T}^{\infty} \gamma^{\tau-t} r_{\textup{MIN}} \,\middle|\, H_t = h_t, A_t = a_t \right] \\
    &= \sup_{\pi} \EXP \left[ r(h_t, a_t) + \sum_{\tau=t+1}^{T-1} R_{\tau} \,\middle|\, H_t = h_t, A_t = a_t \right] + \frac{\gamma^{T-t}}{1-\gamma} r_{\textup{MIN}} \\
    &= Q^{\star}_{t,T}(h_t, a_t) + \frac{\gamma^{T-t}}{1-\gamma} r_{\textup{MIN}}.
\end{align*}
Similarly, it can be shown by reversing the inequality and using $r_{\textup{MAX}}$ instead of $r_{\textup{MIN}}$ that
\begin{align*}
    Q^{\star}_t(h_t, a_t) \leq Q^{\star}_{t,T}(h_t) + \frac{\gamma^{T-t}}{1-\gamma} r_{\textup{MAX}}.
\end{align*}

Thus, we have
\begin{align*} 
   Q^{\star}_{t,T}(h_t, a_t) + \frac{\gamma^{T-t}}{1-\gamma} r_{\textup{MIN}} \leq Q^{\star}_t(h_t, a_t) \leq Q^{\star}_{t,T}(h_t, a_t) + \frac{\gamma^{T-t}}{1-\gamma} r_{\textup{MAX}}.
\end{align*}

Next, consider the value function in the original model for the infinite horizon case $V^{\tilde \pi}_t(h_t)$.
\begin{align*}
    V^{\tilde \pi}_t(h_t) &= \EXP^{\tilde \pi} \left[ \sum_{\tau=t}^{\infty} \gamma^{\tau-t} R_{\tau} \mid H_t = h_t \right] \\
    &\geq \EXP^{\tilde \pi} \left[ \sum_{\tau=t}^{T-1} \gamma^{\tau-t} R_{\tau} + \sum_{\tau=T}^{\infty} \gamma^{\tau-t} r_{\textup{MIN}} \mid H_t = h_t \right] \\
    &= \EXP^{\tilde \pi} \left[ \sum_{\tau=t}^{T-1} R_{\tau} \mid H_t = h_t \right] + \frac{\gamma^{T-t}}{1-\gamma} r_{\textup{MIN}} \\
    &= V^{\tilde \pi}_{t,T}(h_t) + \frac{\gamma^{T-t}}{1-\gamma} r_{\textup{MIN}}.
\end{align*}

Similarly, it can be shown by reversing the inequality and using $r_{\textup{MAX}}$ instead of $r_{\textup{MIN}}$ that
\begin{align*}
    V^{\tilde \pi}_t(h_t) \leq V^{\tilde \pi}_{t,T}(h_t) + \frac{\gamma^{T-t}}{1-\gamma} r_{\textup{MAX}}.
\end{align*}

Thus, we have the result
\begin{align*} 
   V^{\tilde \pi}_{t,T}(h_t) + \frac{\gamma^{T-t}}{1-\gamma} r_{\textup{MIN}} \leq V^{\tilde \pi}_t(h_t) \leq V^{\tilde \pi}_{t,T}(h_t) + \frac{\gamma^{T-t}}{1-\gamma} r_{\textup{MAX}}.
\end{align*}

\subsection{Proof of Proposition~\ref{prop:finite-horizon-bounds}}

First, we look at the proof for \eqref{eq:finite_horizon_AIS_bound_Q} and \eqref{eq:finite_horizon_AIS_bound_V}. A dynamic program can be written for the optimal value functions for $t\in\{1, \cdots, T-1\}$ as $\left\{ V^{\star}_{t,T} \colon \ALPHABET H_t \to \reals \right\}$ and $\left\{ Q^{\star}_{t,T} \colon \ALPHABET H_t \times \ALPHABET A_t \to \reals \right\}$ with $V^{\star}_{T,T} (h_T) = 0$ and $Q^{\star}_{T,T} (h_T, a_T) = 0$ such that:

\begin{align*}
    V^{\star}_{t,T}(h_t) &\coloneqq \max_{a_t \in \ALPHABET A} \EXP \left[ R_t + \gamma V^{\star}_{t+1,T}(H_{t+1}) \mid H_t = h_t, A_t = a_t \right], \\
    Q^{\star}_{t,T}(h_t, a_t) &\coloneqq \EXP \left[ R_t + \gamma V^{\star}_{t+1,T}(H_{t+1}) \mid H_t = h_t, A_t = a_t \right].
\end{align*}

We prove the result by backward induction. Consider the bounds $\lvert Q^{\star}_{T,T}(h_T, a_T) - \tilde Q^{0}(\sigma_T(h_T), a_T) \rvert \leq \eta_{T,T} = 0$ and $\lvert V^{\star}_{T,T}(h_T, a_T) - \tilde V^{0}(\sigma_T(h_T), a_T) \rvert \leq \eta_{T,T} = 0$ which hold by definition. Thus, the bound holds for $t=T$.  This forms the basis of induction. Now, assume that the bound holds for $t+1$ and consider the system at time $t$. We have
\begin{align*}
    \lvert Q^{\star}_{t,T}(h_t, a_t) &- \tilde Q^{T-t}(\sigma_t(h_t), a_t) \rvert \\
    &\stackrel{(a)}\leq \lvert \EXP[R_t \mid H_t = h_t, A_t = a] - \tilde r(\sigma_t(h_t), a) \rvert \\
    & \quad + \gamma \EXP [\lvert V^{\star}_{t+1,T}(H_{t+1}) - \tilde V^{T-t-1}(\sigma_{t+1}(H_{t+1})) \rvert \mid H_t = h_t, A_t = a] \\
    & \quad + \gamma \left\lvert \EXP[\tilde V^{T-t-1}(\sigma_{t+1}(H_{t+1})) \mid H_t = h_t, A_t = a] - \int_{\ALPHABET Z} \tilde V^{T-t-1}(z_{t+1}) \tilde P(dz_{t+1} \mid \sigma_t(h_t), a) \right\rvert \\
    &\stackrel{(b)}\leq \varepsilon_t + \gamma \eta_{t+1,T} + \gamma \rho_{\mathfrak{F}}(\tilde V^{T-t-1}) \delta_t = \eta_{t,T}
\end{align*}
where $(a)$ follows from the triangle inequality, the definition of $Q^{\star}_{t,T}(h_t, a_t)$, the definition of $\tilde Q^{T-t}(\sigma_t(h_t), a_t)$; $(b)$ follows from the induction hypothesis and  the definitions of $\varepsilon_t$ and $\delta_t$. We also have,

\begin{align*}
    \lvert V^{\star}_{t,T}(h_t) - \tilde V^{T-t}(\sigma_t(h_t)) \rvert &\stackrel{(c)}\leq \max_{a_t \in \ALPHABET A} \lvert Q^{\star}_{t,T}(h_t, a_t) - \tilde Q^{T-t}(\sigma_t(h_t), a_t) \rvert \\
    &\leq \eta_{t,T} = \sum_{\tau=t}^{T-1} \gamma^{\tau-t} \varepsilon_{\tau} + \sum_{\tau=t+1}^{T-1} \gamma^{\tau-t} \rho_{\mathfrak{F}}(\tilde V^{T-\tau}) \delta_{\tau-1}
\end{align*}
where $(c)$ follows from the inequalities $\max f(x) \leq \max \lvert f(x) - g(x) \rvert + \max g(x)$. Thus, the result holds for $t$. Then, by the principle of induction, it holds for all $t \le T$. This completes the proof for \eqref{eq:finite_horizon_AIS_bound_Q} and \eqref{eq:finite_horizon_AIS_bound_V}.

Next, we look at the proof for \eqref{eq:finite_horizon_AIS_bound_optpol}. Consider value functions for $t\in\{1, \cdots, T-1\}$ as $\left\{ V^{\tilde \pi}_{t,T} \colon \ALPHABET H_t \to \reals \right\}$ with $V^{\tilde \pi}_{T,T} (h_T) = 0$ that are recursively defined such that:

\begin{align*}
    V^{\tilde \pi}_{t,T}(h_t) \coloneqq \EXP^{\tilde \pi} \left[ R_t + \gamma V^{\tilde \pi}_{t+1,T}(H_{t+1}) \mid H_t = h_t \right].
\end{align*}

We prove this result by backward induction as well. Consider the bound $\lvert V^{\tilde \pi}_{T,T}(h_T) - \tilde V^{0}(\sigma_T(h_T)) \rvert \leq \eta_{T,T} = 0$ which holds by definition. Thus, the bound holds for $t=T$. This forms the basis of induction.
Now assume that the bound holds for $t+1$ and consider the system at time $t$. We have
\begin{align*}
    \lvert V^{\tilde \pi}_{t,T}(h_t) &- \tilde V^{T-t}(\sigma_t(h_t)) \rvert \\
    &\stackrel{(a)}\leq \max_{a \in \ALPHABET A} \lvert \EXP[R_t \mid H_t = h_t, A_t = a] - \tilde r(\sigma_t(h_t), a) \rvert \\
    & \quad + \gamma \max_{a \in \ALPHABET A} \EXP [\lvert V^{\tilde \pi}_{t+1,T}(H_{t+1}) - \tilde V^{T-t-1}(\sigma_{t+1}(H_{t+1})) \rvert \mid H_t = h_t, A_t = a] \\
    & \quad + \gamma \max_{a \in \ALPHABET A} \left\lvert \EXP[\tilde V^{T-t-1}(\sigma_{t+1}(H_{t+1})) \mid H_t = h_t, A_t = a] - \int_{\ALPHABET Z} \tilde V^{T-t-1}(z_{t+1}) \tilde P(dz_{t+1} \mid \sigma_t(h_t), a) \right\rvert \\
    &\stackrel{(b)}\leq \varepsilon_t + \gamma \eta_{t+1,T} + \gamma \rho_{\mathfrak{F}}(\tilde V^{T-t-1}) \delta_t = \eta_{t,T}
\end{align*}
where $(a)$ follows from the triangle inequality, the definition of $V^{\tilde \pi}_{t,T}(h_t)$, the definition of $\tilde V^{T-t}(\sigma_t(h_t))$, the fact that $\sum_{a \in \ALPHABET A} \tilde \pi (a \mid z) = 1$, and upper-bounding over all actions; $(b)$ follows from the induction hypothesis and the definitions of $\varepsilon_t$ and $\delta_t$.

Thus, this gives us
\begin{align*}
    \lvert V^{\tilde \pi}_{t,T}(h_t) &- \tilde V^{T-t}(\sigma_t(h_t)) \rvert \leq \eta_{t,T} = \sum_{\tau=t}^{T-1} \gamma^{\tau-t} \varepsilon_{\tau} + \sum_{\tau=t+1}^{T-1} \gamma^{\tau-t} \rho_{\mathfrak{F}}(\tilde V^{T-\tau}) \delta_{\tau-1}.
\end{align*}
Thus, the result holds for $t$. Then, by the principle of induction, it holds for all $t \le T$. This completes the proof for  \eqref{eq:finite_horizon_AIS_bound_optpol}.

\subsection{Bounds on value approximation}

Combining Proposition~\ref{prop:finite-inf-sandwich-bounds}: \eqref{eq:finite_inf_horizon_sandwich_bound_Q}, \eqref{eq:finite_inf_horizon_sandwich_bound_V} with Proposition~\ref{prop:finite-horizon-bounds}: \eqref{eq:finite_horizon_AIS_bound_Q}, \eqref{eq:finite_horizon_AIS_bound_V} gives us
\begin{align*}
    \tilde V^{T-t}(\sigma_t(h_t)) - \eta_{t,T} + \frac{\gamma^{T-t}}{1-\gamma} r_{\textup{MIN}} \leq &V^{\star}_t(h_t) \leq \tilde V^{T-t}(\sigma_t(h_t)) + \eta_{t,T} + \frac{\gamma^{T-t}}{1-\gamma} r_{\textup{MAX}}, \\
    \tilde Q^{T-t}(\sigma_t(h_t), a_t) - \eta_{t,T} + \frac{\gamma^{T-t}}{1-\gamma} r_{\textup{MIN}} \leq &Q^{\star}_t(h_t, a_t) \leq \tilde Q^{T-t}(\sigma_t(h_t), a_t) + \eta_{t,T} + \frac{\gamma^{T-t}}{1-\gamma} r_{\textup{MAX}}.
\end{align*}
Note that due to discounting both $\tilde{\mathcal{B}}_V$ and $\tilde{\mathcal{B}}_Q$ are contractions. Therefore, from the Banach fixed point theorem, we know that $\lim_{T \to \infty} \tilde V^{T-t} = \tilde V^{\star}$ and $\lim_{T \to \infty} \tilde Q^{T-t} = \tilde Q^\star$. Furhermore, by the continuity of $\rho_{\mathfrak{F}}(\cdot)$, we have $\lim_{T \to \infty} \rho_{\mathfrak{F}}(\tilde V^{T-t}) = \rho_{\mathfrak{F}}(\tilde V^{\star})$. Thus, taking the limit $T \to \infty$ in the above equations gives us
\begin{align*}
    \tilde V^{\star}(\sigma_t(h_t)) - \eta^{\star}_t \leq V^{\star}_t(h_t) \leq \tilde V^{\star}(\sigma_t(h_t)) + \eta^{\star}_t, \\
    \tilde Q^{\star}(\sigma_t(h_t), a_t) - \eta^{\star}_t \leq Q^{\star}_t(h_t, a_t) \leq \tilde Q^{\star}(\sigma_t(h_t), a_t) + \eta^{\star}_t,
\end{align*}
where, $\eta^{\star}_t = \sum_{\tau=t}^{\infty} \gamma^{\tau-t} \varepsilon_{\tau} + \gamma \rho_{\F}(\tilde V^{\star}) \sum_{\tau=t}^{\infty} \gamma^{\tau-t} \delta_{\tau}$.

This shows that
\begin{align} 
    \lvert V^{\star}_t(h_t) - \tilde V^{\star}(\sigma_t(h_t)) \rvert \leq \eta^{\star}_t, \label{eq:value_approx_main_bound_V} \\ 
    \lvert Q^{\star}_t(h_t, a_t) - \tilde Q^{\star}(\sigma_t(h_t), a_t) \rvert \leq \eta^{\star}_t. \label{eq:value_approx_main_bound_Q}
\end{align}
This establishes the bounds in the main theorem on value approximation.

\subsection{Bounds on policy approximation}

Combining Proposition~\ref{prop:finite-inf-sandwich-bounds}: \eqref{eq:finite_inf_horizon_sandwich_bound_pol_bound} with Proposition~\ref{prop:finite-horizon-bounds}: \eqref{eq:finite_horizon_AIS_bound_optpol} gives us
\begin{align*}
    \tilde V^{T-t}(\sigma_t(h_t)) - \eta_{t,T} + \frac{\gamma^{T-t}}{1-\gamma} r_{\textup{MIN}} \leq V^{\tilde \pi}_t(h_t) \leq \tilde V^{T-t}(\sigma_t(h_t)) + \eta_{t,T} + \frac{\gamma^{T-t}}{1-\gamma} r_{\textup{MAX}},
\end{align*}
Note that due to discounting, $\tilde{\mathcal{B}}_V$ is a contraction. Therefore, from the Banach fixed point theorem, we know that $\lim_{T \to \infty} \tilde V^{T-t} = \tilde V^{\tilde \pi^{\star}}$. Furthermore, by continuity of $\rho_{\mathfrak{F}}(\cdot)$, we have $\lim_{T \to \infty} \rho_{\mathfrak{F}}(\tilde V^{T-t}) = \rho_{\mathfrak{F}}(\tilde V^{\tilde \pi^{\star}})$. Thus, taking the limit $T \to \infty$ in the above equation gives us
\begin{align*}
    \tilde V^{\tilde \pi^{\star}}(\sigma_t(h_t)) - \eta^{\tilde \pi^{\star}}_t \leq V^{\tilde \pi}_t(h_t) \leq \tilde V^{\tilde \pi^{\star}}(\sigma_t(h_t)) + \eta^{\tilde \pi^{\star}}_t,
\end{align*}
where, $\eta^{\tilde \pi^{\star}}_t = \sum_{\tau=t}^{\infty} \gamma^{\tau-t} \varepsilon_{\tau} + \gamma \rho_{\F}(\tilde V^{\tilde \pi^{\star}}) \sum_{\tau=t}^{\infty} \gamma^{\tau-t} \delta_{\tau}$.

This shows that
\begin{align} \label{eq:value_approx_aux_bound}
    \lvert V^{\tilde \pi}_t(h_t) - \tilde V^{\tilde \pi^{\star}}(\sigma_t(h_t)) \rvert \leq \eta^{\tilde \pi^{\star}}_t.
\end{align}
Note that $\tilde V^{\tilde \pi^{\star}}(\sigma_t(h_t)) = \tilde V^{\star}(\sigma_t(h_t))$. Finally, we have the bound on policy approximation:
\begin{align*}
    \lvert V^{\star}_t(h_t) -  V^{\tilde \pi}_t(h_t) \rvert &\stackrel{(a)}\leq \lvert V^{\star}_t(h_t) - \tilde V^{\star}(\sigma_t(h_t)) \rvert + \lvert V^{\tilde \pi}_t(h_t) - \tilde V^{\tilde \pi^{\star}}(\sigma_t(h_t)) \rvert \\
    &\stackrel{(b)}\leq 2\sum_{\tau=t}^{\infty} \gamma^{\tau-t} \varepsilon_{\tau} + 2 \gamma \rho_{\F}(\tilde V^{\star}) \sum_{\tau=t}^{\infty} \gamma^{\tau-t} \delta_{\tau},
\end{align*}
where $(a)$ follows from the triangle inequality and $(b)$ follows from \eqref{eq:value_approx_main_bound_V} and \eqref{eq:value_approx_aux_bound}. This establishes the bound in the main theorem on policy approximation.

\section{Instance independent bounds}

In this section, we illustrate \emph{instance independent} bounds by upper bounding $\rho_{\F}(V^\star_\xi)$ in terms of properties of the transition and reward functions. These \emph{instance independent} bounds have been established in \cite{subramanian2022approximate}.

When the IPM considered is the total variation distance, i.e., $\F = \F_{\TV} \coloneqq \{f : \SPAN(f) \le 1\}$ (where $\SPAN(f)$ is the span semi-norm of a function), we have
\begin{align*}
    \rho_{\F}(V^\star_\xi) = \SPAN(V^\star_\xi) \leq \frac{\SPAN(r)}{(1-\gamma)}.
\end{align*}

Next, when the IPM considered is the Wasserstein distance, i.e., $\F = \F_{\Was} \coloneqq \{ f : \LIP(f) \le 1 \}$ (where $\ALPHABET X$ is a metric space and $\LIP(f)$ is the Lipschitz constant of the function $f$, computed with respect to the metric on $\ALPHABET X$); the Lipschitz constants $L_r$ of the reward function and $L_P$ of the state transition matrix are finite; and $\gamma L_P < 1$; then we have
\begin{align*}
    \rho_{\F}(V^\star_\xi) = \LIP(V^\star_\xi) \leq \frac{L_r}{(1-\gamma L_P)}.
\end{align*}
\section{Proof of claims in Theorem 2}

\subsection{\texorpdfstring{$W^2_t(z, a)$}{W\^2\_t} converges to 0 almost surely.}\label{app:W2}
The definition of the learning rate $\{\alpha_t\}_{t \ge 1}$ implies that 
   \begin{equation}
       W^2_{t+1}(z, a) =
       \lim_{t \to \infty}
       \dfrac{\dfrac{1}{t} \sum_{k=0}^{t-1} F^2_k(z, a) \IND_{\{Z_{k}=z, A_k=a\}}}
       {\dfrac{1}{t} \sum_{k=0}^{t-1} \IND_{\{Z_{k}=z, A_k=a\}}}
       \label{eq:W2}
   \end{equation}
   
    Recall that according to Strong Law of Large Numbers for for Markov chains, given a aperiodic and irreducible Markov chain $\{X_t\}_{t \ge 1}$, $X_t \in \ALPHABET X$, with stationary distribution $\rho$ and any (measurable) function $h$, the long run average is equal to the steady state value, i.e., 
    \begin{equation}\label{eq:SLLN}
        \lim_{T \to \infty} \frac{1}{T} \sum_{t = 1}^T h(X_t) = \sum_{x \in \ALPHABET X} h(x) \rho(x),
        a.s.. 
    \end{equation}
    
    Let $\xi(s,y,z,z_{+},a) = \xi(s,y,z,a) P_\xi(z_{+} \mid z,a)$. Then,
    the numerator of~\eqref{eq:W2} may be simplified as
    \begin{equation}\label{eq:num}
         \sum_{(s,y,z,z_{+},a)} \biggl[
         r(s,a) - r_\xi(z,a) 
         + \gamma V^\star_\xi(z_{+})
         - \gamma \sum_{z'} P_{\xi}(z' | z, a) V^\star_\xi(z')
        \biggr]\xi(s,y,z,z_{+},a).
   \end{equation}
    By definition of $r_\xi(z,a)$, we have 
    \[
        \sum_{(s,z,a)}[ r(s,a) - r_\xi(z,a) ] \xi(s,z,a)
        = \sum_{z,a} \Bigl[ \medop\sum_{s} r(s,a) \xi(s|z,a) - r_\xi(z,a) \Bigr] = 0
    \]
    Similarly, by definition of $\xi(s,y,z,z_+,a)$, we have 
    \[
         \sum_{s,y,z,z_+,a} \bigl[ V^\star_\xi(z_{+})
         - \medop\sum_{z'} P_{\xi}(z' | z, a) V^\star_\xi(z') \bigr]
         \xi(s,y,z,a) P_{\xi}(z_{+} \mid z, a)
         = 0
    \]
    Substituting the above two equations in~\eqref{eq:num}, we get that the numerator of~\eqref{eq:W2} is zero. Hence, $W^2_t(z,a) \to 0$, a.s., for all $(z,a)$.

\subsection{\texorpdfstring{$W^1_t(z,a)$}{W\^1\_t(z,a)} cannot remain above \texorpdfstring{$C\epsilon$}{Cepsilon} forever}

    We first state \cite[Lemma 3]{jaakkola1993convergence}.\footnote{In \cite{jaakkola1993convergence}, the result is stated with using the iteration 
    \(
            X_{t+1}(x) = (1 - \alpha(x)) X_{t}(x) + \gamma \beta_t(x) \| X_t \|.
    \)
    where $\gamma \in (0,1)$. But $\gamma$ plays no role in the proof argument of \cite[Lemma 3]{jaakkola1993convergence} and may be omitted.}
    \begin{lemma}\label{lem:jaakkola}
        Let $\ALPHABET X$ be a finite set. Consider a vector valued stochastic process $\{X_t\}_{t \ge 1}$, where
        \[
            X_{t+1}(x) = (1 - \alpha(x)) X_{t}(x) + \beta_t(x) \| X_t \|.
        \]
        where $\sum_{t \ge 1} \alpha_t(x) = \infty$, $\sum_{t \ge 1} \alpha_t^2(x) < \infty$, $\sum_{t \ge 1} \beta_t(x) = \infty$, $\sum_{t \ge 1} \beta^2_t(x) < \infty$, almost surely,  and $\EXP[ \beta_t(x) ] \le \EXP[ \alpha_t(x) ]$. 
        Then, $X_t(x)$ converges to zero almost surely for all $x \in \ALPHABET X$.  
    \end{lemma}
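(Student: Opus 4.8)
The plan is to pass to a nonnegative dominating recursion and then run a ``shrinking $\limsup$'' argument built on the exact telescoping identity for the homogeneous decay factors. Writing $U_t(x) = |X_t(x)|$ and $M_t = \|X_t\| = \max_{x \in \ALPHABET X} U_t(x)$, and using $\alpha_t(x) \in [0,1]$, $\beta_t(x) \ge 0$ together with the triangle inequality, the recursion yields the componentwise bound
\[
  U_{t+1}(x) \le (1 - \alpha_t(x)) U_t(x) + \beta_t(x) M_t,
\]
so it suffices to show $M_t \to 0$ almost surely for this process. A preliminary step is to check that $\{M_t\}$ is almost surely bounded, which follows from the conditions $\sum_t \alpha_t(x) = \infty$, $\sum_t \alpha_t^2(x) < \infty$ (and the analogous conditions on $\beta_t$) by a standard stochastic-stability argument; this guarantees $G := \limsup_t M_t < \infty$ and $\bar M_{t_0} := \sup_{s \ge t_0} M_s < \infty$.

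Next I would unroll the linear recursion between a reference time $t_0$ and $t$, obtaining
\[
  U_t(x) \le \Bigl[ \prod_{j=t_0}^{t-1}(1-\alpha_j(x)) \Bigr] U_{t_0}(x)
  + \sum_{s=t_0}^{t-1} \Bigl[ \prod_{j=s+1}^{t-1}(1-\alpha_j(x)) \Bigr] \beta_s(x)\, M_s.
\]
Since $\sum_t \alpha_t(x) = \infty$, the product $\prod_{j=t_0}^{t-1}(1-\alpha_j(x)) \to 0$, so the first term vanishes. Bounding $M_s \le \bar M_{t_0}$ in the second term isolates the cumulative weight $S_t(x) := \sum_{s=t_0}^{t-1} [\prod_{j=s+1}^{t-1}(1-\alpha_j(x))]\beta_s(x)$. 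The workhorse is the exact telescoping identity $\sum_{s=t_0}^{t-1}[\prod_{j=s+1}^{t-1}(1-\alpha_j(x))]\alpha_s(x) = 1 - \prod_{j=t_0}^{t-1}(1-\alpha_j(x)) \to 1$. If $\beta_s(x)$ is dominated by a fixed fraction $\gamma < 1$ of $\alpha_s(x)$ along the relevant sums, this identity gives $\limsup_t S_t(x) \le \gamma$, hence $\limsup_t M_t \le \gamma \bar M_{t_0}$; letting $t_0 \to \infty$, so that $\bar M_{t_0} \downarrow G$, yields $G \le \gamma G$ and therefore $G = 0$.

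The hard part is that the hypothesis controls only the expectations, $\EXP[\beta_t(x)] \le \EXP[\alpha_t(x)]$, rather than a pathwise ratio of $\beta_s(x)$ to $\alpha_s(x)$, so the random cumulative weight $S_t(x)$ must be compared to its $\alpha$-analogue almost surely. I would handle this by writing $S_t(x)$ and the telescoping $\alpha$-sum as weighted partial sums and invoking a martingale / strong-law argument: the conditions $\sum_t \alpha_t^2(x) < \infty$ and $\sum_t \beta_t^2(x) < \infty$ make the weighted fluctuations of $\beta_s(x) - \alpha_s(x)$ summable, so that $\limsup_t S_t(x)$ is governed almost surely by the asymptotic average of $\EXP[\beta_t(x)]/\EXP[\alpha_t(x)] \le 1$. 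This is precisely the place where a strict gap is needed to force the contraction factor below $1$: in the original statement the gap is supplied by the explicit $\gamma \in (0,1)$ multiplying $\beta_t(x)$ (indeed, for a single coordinate with $\beta_t \equiv \alpha_t$ the recursion is stationary and does not converge), and carrying that $\gamma$ through the telescoping bound gives $\limsup_t S_t(x) \le \gamma < 1$, which closes the argument as above.
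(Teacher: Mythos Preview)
The paper does not prove this lemma; it is quoted verbatim from \cite[Lemma~3]{jaakkola1993convergence} and used as a black box. Your unrolling-plus-telescoping argument is essentially the same mechanism Jaakkola, Jordan and Singh use in their original proof, so on the level of approach there is nothing to compare.

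What is worth flagging is your closing observation. You note that the contraction step needs a strict gap, supplied in the original by the explicit factor $\gamma\in(0,1)$ in front of $\beta_t(x)\|X_t\|$, and you give the one-coordinate counterexample $\beta_t\equiv\alpha_t$, for which $X_{t+1}=(1-\alpha_t)X_t+\alpha_t|X_t|=X_t$ whenever $X_t>0$, so the process is constant and does not converge to zero. This is correct, and it directly contradicts the paper's footnote asserting that ``$\gamma$ plays no role in the proof argument \dots\ and may be omitted.'' As stated here---with only $\EXP[\beta_t(x)]\le\EXP[\alpha_t(x)]$ and no $\gamma$---the lemma is false. The paper's \emph{application} of the lemma is unaffected, since there one has $F^1_t(z,a)\le\gamma\|\Delta_t\|_\infty$ and hence effectively $\beta_t\le\gamma\alpha_t$ pathwise; but the restated lemma itself should retain the $\gamma$, and your proof sketch only closes once that factor is reinstated.
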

    
    Assume that for any $t > T(\omega,\epsilon)$, we have $\|W^1_t\|_{\infty} > C\epsilon$. Recall that we have shown  in~\eqref{eq:F1-bound} that if $\|W^1_t\|_{\infty} > C \epsilon$ then $F^1_t(z,a) \le \| W^1_t \|_{\infty}$. Define $\beta_t(z,a) = \alpha_t(z,a) F^1_t(z,a)/\|W^1_t\|_{\infty}$. Then, the recursion for $W^1_t$ can be written as
    \[
        W^1_{t+1}(z,a) = (1 - \alpha_t(z,a)) W^1_t(z,a) + \beta_t(z,a) \|W^1_t\|_{\infty}
    \]
    which is an instance of the linear iteration described in \autoref{lem:jaakkola}. Therefore, by \autoref{lem:jaakkola}, $W^1_t(z,a)$ converges to zero almost surely. But this contradicts the assumption that $\|W_t\|_{\infty} > C\epsilon$. Hence, the assumption cannot be true.

\subsection{If \texorpdfstring{$\|W^1_t\|_{\infty}$}{norm of W\^1\_t} hits below \texorpdfstring{$C\epsilon$}{C epsilon} then it stays there}

    Recall that we have shown in \eqref{eq:F0-bound} that $F^1_t(z,a) \le \gamma \| \Delta_t \|_{\infty}$. Now suppose that for some $t > T(\epsilon,\omega)$, $\|W^1_t \|_{\infty} < C\epsilon$. Then, 
    \[
       F^1_t(z,a) \le \gamma (\| W^1_t \|_{\infty} + \|W^2_t\|_\infty) \le \gamma (C\epsilon + \epsilon) 
       \le \gamma (1 + C) \epsilon \le C \epsilon
    \]
    where we have used the fact that $\gamma (1 + \frac 1C) < 1$ in the last inequality. 
    Thus,
    \[
        |W^1_{t+1}(z,a) \le (1 - \alpha_t(z,a))| W^1_t(z,a)| + \alpha_t(z,a) |F^1_t(z,a)|
        \le (1 - \alpha_t(z,a)) C\epsilon + \alpha_t(z,a) C\epsilon 
        = C\epsilon.
    \]
    Hence $\|W_{t+1}\|_{\infty} \le C\epsilon$.

\section{Recurrent Q-learning with AIS}

The training phase of RQL-AIS consists of sampling batches of sequential data from the R2D2 replay buffer and training: 1) the AIS generator components using the AIS loss, 2) the Q-function using the multi-step Q-learning loss. The full RQL-AIS algorithm is presented in \autoref{alg:r2d2-AIS}. In this section, we assume the AIS history generator $\sigma(h,z_0)$ is an RNN function which receives the initial hidden state $z_0$ and input history $h$ and outputs $z$ which is the hidden state corresponding to the sample at the end of the history sequence $h$.

\begin{algorithm}
\caption{Recurrent Q-learning with AIS (RQL-AIS)}\label{alg:r2d2-AIS}
\begin{algorithmic}[1]
	\STATE init $\sigma$, $\hat{P^{y}}$, $\hat{r}$, $Q_{\theta}$ to random networks, init $\theta' \leftarrow \theta$ and R2D2 buffer $\mathbf{D} \leftarrow \{\}$, L is the R2D2 sequence length, $\phi$ is the pretrained observation encoder
      \FOR{episode $=1, M$}
        \STATE start episode, init history ${h} \leftarrow \{\}$
        \WHILE{not done}
            \STATE receive observation $y_{t}$, encode $y_{t}$ with $\phi$ and append to history h.
            
            \item[]  \texttt{\textcolor{gray}{/* \textit{$\epsilon$-greedy policy for data collection} */}}
            \STATE Select action $a_{t} = \operatorname*{argmax}_{a}  Q_{\theta}(\sigma(h,\mathbf{0}),a)$ with probability $1 - \epsilon$ otherwise random action with probability $\epsilon$, append action to h
            \item[]  \texttt{\textcolor{gray}{/* \textit{Add data to buffer in regular intervals (Once every L steps) } */}}
            \STATE Add sequence to $\mathbf{D}$
            
		    \STATE Sample batch of experience sequences $(h_b,z,y_{1:L}, r_{1:L}, a_{1:L})$ from $\mathbf{D}$
                \item[]  \texttt{\textcolor{gray}{/* \textit{Q-function and AIS component training} */}}
                \STATE Initialize $\sigma$ with z and burn-in history $h_b$ and detach gradients: $z^{\prime} = \sigma(h_b,z)$
                \STATE Compute AIS loss for $(z^{\prime},y_{1:L}, r_{1:L}, a_{1:L})$ and update $\sigma$, $\hat{P^{y}}$ and $\hat{r}$ 
		    \STATE Compute n-step Q-learning loss for $(z^{\prime},y_{1:L}, r_{1:L}, a_{1:L})$ and update $Q_{\theta}$		    
            \STATE Update target Q-function in regular intervals $\theta' \leftarrow \theta$
		\ENDWHILE
	  \ENDFOR
    \end{algorithmic}
\end{algorithm}

\subsection{Recurrent replay buffer}

With non-recurrent DQN for fully-observable environments, individual single step samples $(s,a,r,s^{\prime})$ are saved in the replay buffer. With Recurrent Q-learning, full histories are required with each sample to allow for the accurate computation of the recurrent hidden states. The R2D2 method \cite{kapturowski2018recurrent} suggests saving fixed length sequences of observation, action and rewards $(y,a,r)$ in the replay buffer. Furthermore, to recreate the internal state of the RNN during training, it is suggested to store the following with each sample sequence: 1) A truncated preceding "Burn-In" history, 2) An initial RNN internal hidden state representing the internal RNN state at the start of the "Burn-In" history. The "Burn-In" sequence allows us to unroll the recurrent network on a truncated portion of preceding history with each sample sequence. The "Burn-In" portion is only used for unrolling the recurrent network and updates are only done on the main sequence. Additionally, the recurrent state at the start of this "Burn-In" history is also saved with each sample in the buffer. Both of these allow the recurrent network to recreate the hidden state at the start of the main sample sequence as accurately as possible. We utilize the same replay buffer for RQL-AIS and our ND-R2D2 baseline.

The problem of inaccuracy in recreated RNN hidden states is called representation drift and the effect of distributed training on this problem is investigated in the original R2D2 paper \cite{kapturowski2018recurrent}. The representation drift problem is less severe when the data samples are more recent as their saved internal states comes from a version of the RNN which is closer to the most recently updated version. It is demonstrated that the distributed setting can mitigate this issue with higher number of actors \cite{kapturowski2018recurrent}. The higher number of actors allows for a larger amount of generated experience, which results in data being added to the buffer more frequently, meaning older samples are less likely to be sampled by the learner unit, effectively alleviating the isssue. In our setup, distributed training is both unnecessary and infeasible. The environments used for our experimental setup require significantly less data to be successfully solved and also the computational resources available at our disposal do not allow for a distributed R2D2 implementation.  In order to mitigate the representation drift problem in our setup, we choose a different set of R2D2 hyperparameters compared with \cite{kapturowski2018recurrent}. First, we choose a significantly smaller main sequence length for our R2D2 buffer and second, we choose a much longer "Burn-In" sequence length compared with the original R2D2 implementation. This combination allows us to achieve a good performance with the R2D2 replay buffer while using a single actor setup. By using a shorter main sequence length and a larger batch size, single step samples are less correlated in each batch which helps with the performance of Q-learning. Furthermore, adjacent saved samples are no longer overlapping each other as they do in the original R2D2 implementation as we find that unnecessary considering the shorter main sequence length.

\subsection{Q-learning}
In the main text, we discussed the AIS loss used for training the AIS-generator. Assuming $Z_{t}$ is the recreated AIS state representation for each single-step sample, we train the Q-function using the n-step Q-learning loss:

\begin{equation}
 \left| \sum_{k=0}^{n-1} R_{t+k} \gamma^{k}+\gamma^{n} Q_{\theta^{-}}\left(Z_{t+n}, a^{*}\right) - Q_{\theta}\left(Z_t, A_t\right) \right|^{2} , \quad a^{*}=\underset{a}{\arg \max } Q_{\theta}\left(Z_{t+n}, a\right).
\end{equation}

with $Q_{\theta}$ being the Q-function and $Q_{\theta^{-}}$ being the target Q-function. Also, $a^{*}$ is chosen by taking the arg-max of $Q_{\theta}(Z_{t+n}, a)$ similar to the double Q-learning approach \cite{https://doi.org/10.48550/arxiv.1509.06461}. The multi-step length $n$ is set to 5 for all experiments. Following the DQN implementation, target Q-function parameters $\theta^{-}$ are periodically updated with the most recent value of $\theta$. Gradients from the Q-learning loss do not backpropagate through the AIS generator parameters.

We can further utilize prioritized experience replay (PER) \cite{https://doi.org/10.48550/arxiv.1511.05952} with RQL-AIS. With prioritized sampling, sequences in the R2D2 buffer are sampled with non-uniform probabilities. Each sequence's priority is the absolute value of the n-step Q-learning error averaged over all samples in the sequence. Similar to the original PER method \cite{https://doi.org/10.48550/arxiv.1511.05952}, we use importance-sampling (IS) weights to correct the change in the data distribution introduced by the non-uniform weights. The importance sampling weights are applied to rescale both the AIS loss and the Q-learning loss.

\subsection{Environments}

\begin{figure}[!t]
\includegraphics{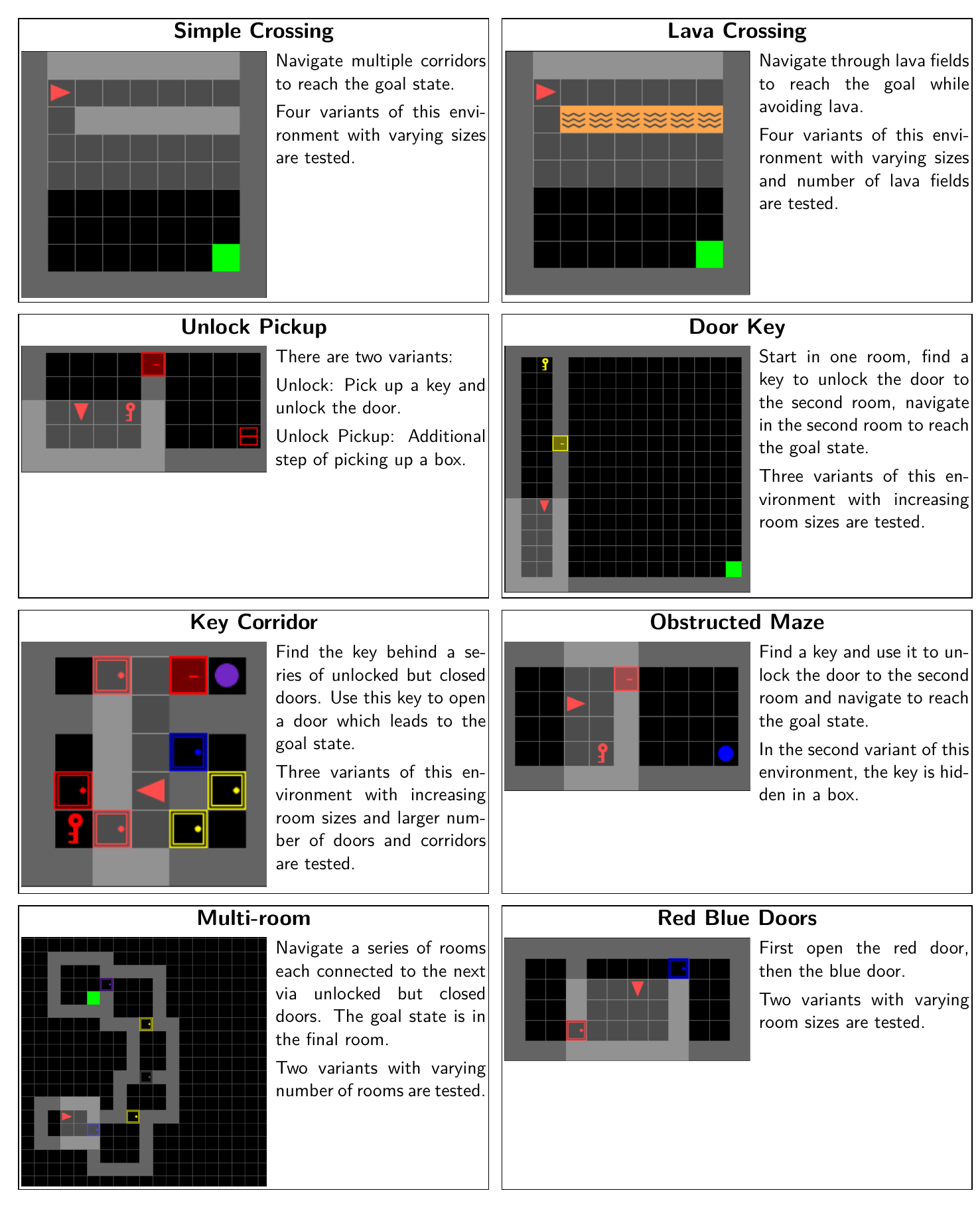}
\caption{Details of the different MiniGrid environments}
\label{fig:envs}
\end{figure}

We evaluate the performance of RQL-AIS and other baseline algorithms on 22 environments form the Minigrid testbed~\cite{minigrid}. All environments are partially observed grid worlds where the agent has to solve a series of tasks. At each time, the agent observes a $7 \times 7$ view in front of it; each observed tile is encoded as a three-dimensional tuple of (\textsc{object\_id}, \textsc{color\_idx}, \textsc{state}). 
The agent cannot see through walls and closed doors nor see inside closed boxes. The agent receives a positive reward of $+1$ on successful completion of the task and no intermediate rewards. The environments are grouped into eight major categories, as shown in \autoref{fig:envs}

Given the large (discrete) observation space, we follow the methodology of~\cite{subramanian2022approximate} and use a pre-trained auto-encoder to compress the observations. The auto-encoder is trained on a dataset of random agent observations, separately for each environment. During the RL phase, the auto-encoder is frozen and doesn't change.

\subsection{Algorithms tested}

We evaluate the performance of the algorithms described below. Each algorithm has two variants: one using uniform sampling (denoted by ``U'') and another using prioritized experience replay (denoted by ``PER'')
\begin{itemize}
    \item \texttt{RQL-AIS-U} and \texttt{RQL-AIS-PER}: Recurrent Q-learning algorithm with AIS representations. 
    \item \texttt{ND-R2D2-U} and \texttt{ND-R2D2-PER}: Non-distributed R2D2 
    \item \texttt{ND-R2D2-AE-U} and \texttt{ND-R2D2-AE-PER}:
    Non-distributed R2D2 where observations are compressed using pre-trained auto-encuders.
\end{itemize}
Note that both \texttt{RQL-AIS} and \texttt{ND-R2D2-AE} use the same pretrained auto-encoders.

We train all algorithms for $T = 4 \cdot 10^6$ environment steps for $N = 5$ seeds. The agents are trained using epsilon-greedy approach~\cite{mnih2013atari} to allow for exploration. The value of epsilon is chosen to smoothly decay between $\epsilon_{\text{start}}$ and $\epsilon_{\text{end}}$ according to:
\[
    \epsilon_{\text{expl},t} = \epsilon_{\text{end}} + (\epsilon_{\text{start}} - \epsilon_{\text{end}}) \exp\left(- \frac{t}{\epsilon_{\text{decay}}}\right).
\]

The trained agents are evaluated at regular intervals. During evaluation, the agents are tested for 10 episodes and actions are chosen greedily with respect to the trained Q-function. 

\subsection{Training results}

The mean and the standard deviations of the final performance for all algorithms is shown in \autoref{tab:my_label--}. \autoref{fig:performance-full} shows the training curves for all 22 tested environments.

Prioritized experience replay can offer a significant boost to both algorithms in more difficult environments and we see RQL-AIS-PER to show the best performance among all tested algorithms. This variant of RQL-AIS is capable of solving all of the tested environments. We observe that variants of RQL-AIS and ND-R2D2 with prioritized sampling generally perform worse in simpler environments. Prioritized sampling relies on priority weights obtained from the absolute value of the n-step Q-learning loss. In the initial phases of training, Q value estimates are inaccurate, making prioritized sampling hurt performance initially while the uniform sampling variants of these algorithms are able to quickly solve these environments. We observe that AIS state representations are very effective in solving the more difficult environments and RQL-AIS to show superior performance in more difficult environments regardless of the sampling approach used for the replay buffer.

We are reporting two additional variants of R2D2 which work with the pretrained encoders (similar to RQL-AIS). The main ND-R2D2 baseline used in the main text does not use pretrained encoders as we believe pretrained encoders are not an essential component of ND-R2D2. Nevertheless, we report results for ND-R2D2 using these pretrained encoders so that their effect can be decoupled from the other factors. We observe that their performance is very similar to ND-R2D2 working on raw observations. 

\begin{table}[!tbp]
    \centering
    \includegraphics[width=\textwidth]{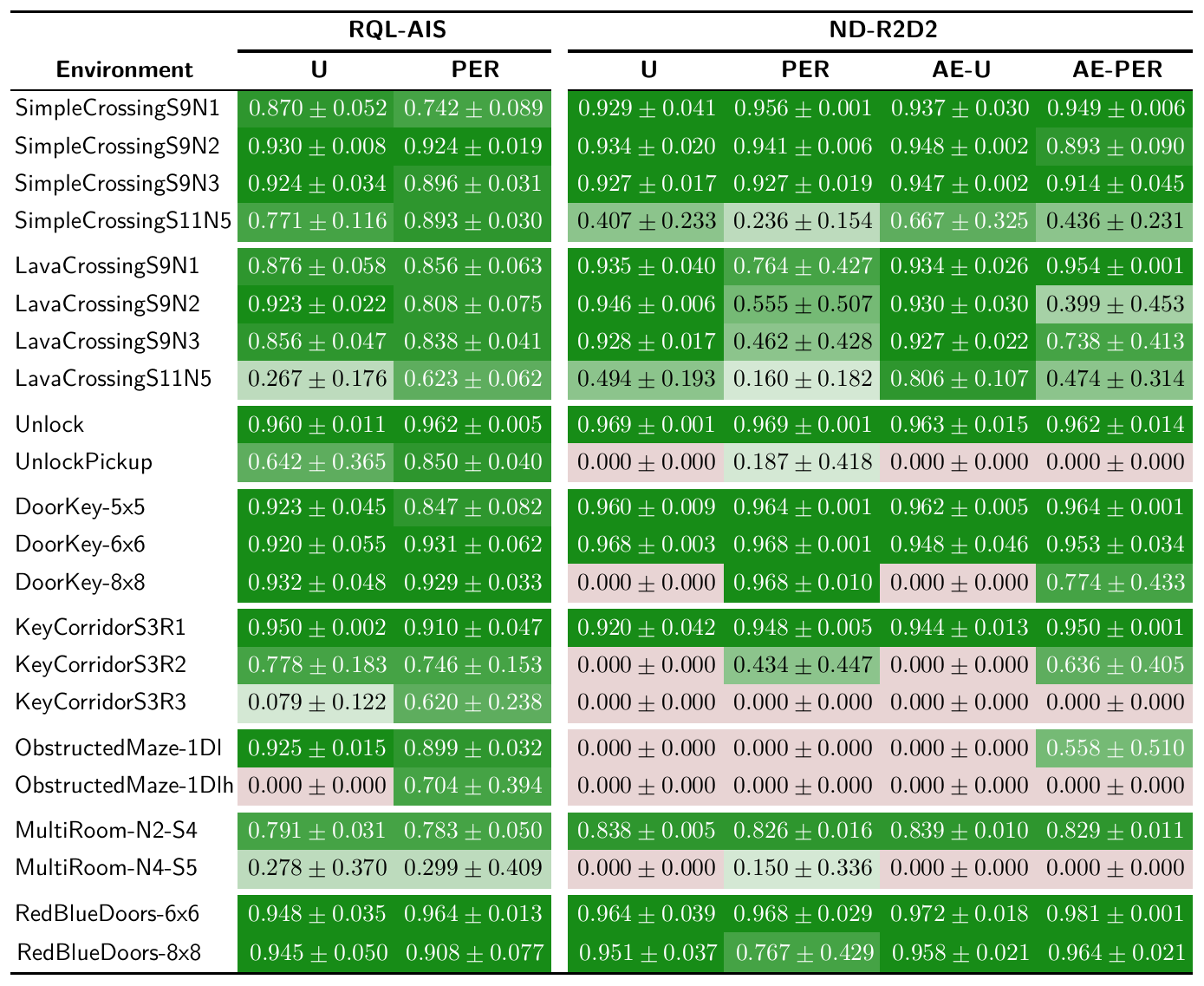}
    \caption{Comparison of RQL-AIS with the different variants of ND-R2D2 on the MiniGrid benchmark.}
    \label{tab:my_label--}
\end{table}

\begin{figure}[!tbp] 
\centering
\includegraphics[width=\textwidth]{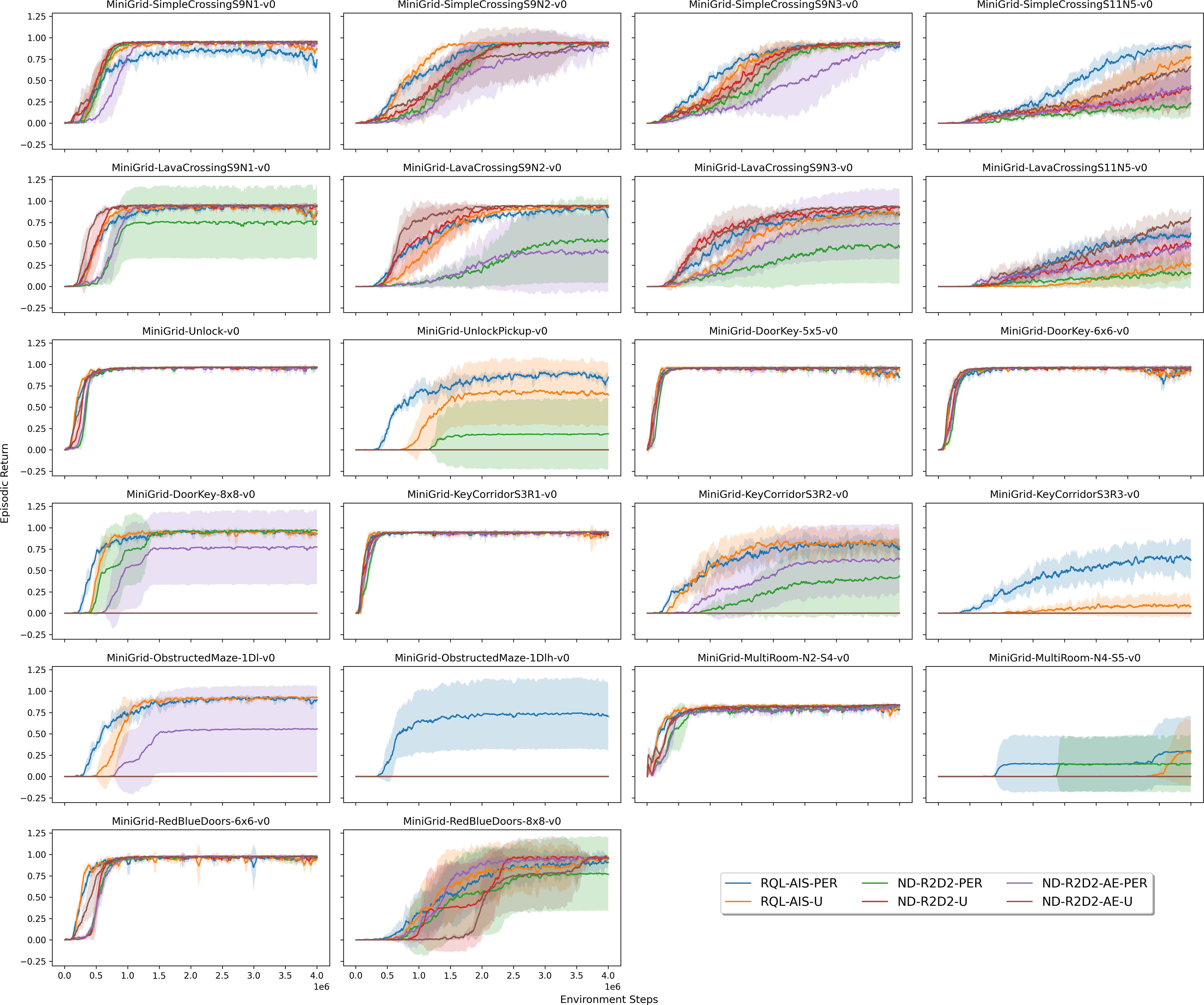}
\caption{Training curves from all 22 tested MiniGrid environments. Results for both uniform sampling and prioritized sampling variants of RQL-AIS and ND-R2D2 are provided. Two additional variants of R2D2 are also included which use the pretrained encoders.}
\label{fig:performance-full}
\end{figure}

\section{Effect of the hyperparameter \texorpdfstring{$\lambda$}{lambda}}

In this section, we report the effect of the $\lambda$ hyperparameter on performance for four candidate environments. The plots show the performance of RQL-AIS with uniform sampling over time. We observe that the performance of RQL-AIS is robust with respect to the choice of the $\lambda$ hyperparameter with minor changes in performance seen between the different variants. We only see the variant with $\lambda = 0.999$ to perform slightly worse than the other variants. Very high values of $\lambda$ can diminish the gradients coming from the observation prediction component and hurt performance. We believe that the observation prediction component of the AIS loss has a big impact on the performance of RQL-AIS. This is understandable as MiniGrid environments are sparse reward and zero rewards are received on every environment interaction except the terminal steps leading to a successful completion of the task. Observation prediction can provide a much richer learning signal allowing the agent to learn more meaningful AIS state representations. The performance of RQL-AIS with different values of $\lambda$ is reported in \autoref{fig:performance-lambda}. We choose $\lambda = 0.5$ for all experiments involving RQL-AIS.

\begin{figure}
\centering
\includegraphics[width=\textwidth]{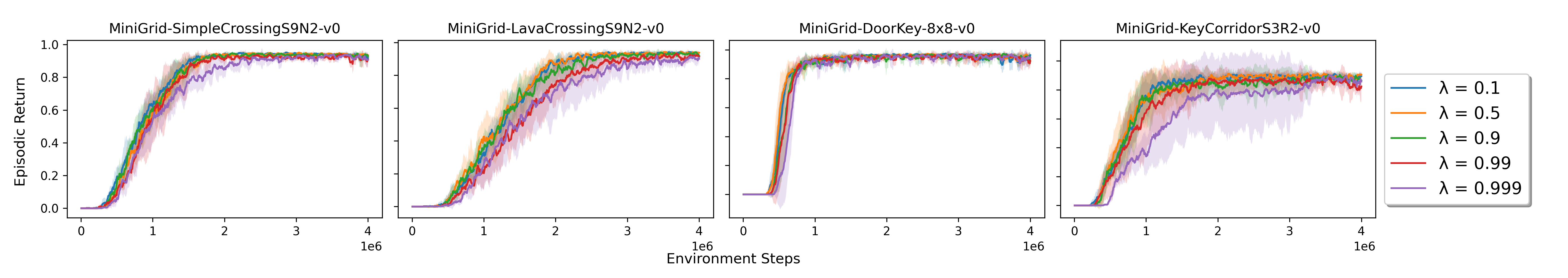}
\caption{Performance plots of RQL-AIS using different values of the $\lambda$ hyperparameter.}
\label{fig:performance-lambda}
\end{figure}

\section{Implementation details}

In this section, we first present the hyperparameters used for implementing RQL-AIS and ND-R2D2. These hyperparameters are shared between the two algorithms (unless specifically stated). 
We report the hyperparameter values in Table \ref{hyp}.

\begin{table}
    \centering
\begin{tabular}{@{}cc@{}}
 \toprule
Number of Seeds & 5 \\
 Number of Environment steps  & $4 \times 10^6$  \\
 Discount Factor $\gamma$  & $0.99$  \\
 $\epsilon_{\text{start}}$ & 1.0 \\
 $\epsilon_{\text{end}}$ & 0.05 \\
$\epsilon_{\text{decay}}$ & 400000 \\
 Evaluation Interval & 5000 environment steps \\
 R2D2 sequence length &   10\\
 R2D2 Burn-In sequence length &  50 \\
 Optimizer  &  Adam \cite{https://doi.org/10.48550/arxiv.1412.6980} \\[3pt]
 AIS $\lambda$ (for RQL-AIS) &   0.5 \\
 AIS learning rate (for RQL-AIS) &   $10^{-3}$ \\
 Q-function learning rate (for RQL-AIS) &   $10^{-3}$\\
 learning rate (for ND-R2D2) & $10^{-3}$\\
 Minibatch Size &   256 \\
 Network Update Interval &  10 environment steps\\[3pt]
 Target network update interval & 100 updates to main network  \\
 Priority Exponent (PER $\alpha$) &  $0.6$ \\
 Importance Sampling Exponent (PER $\beta$) &  $[0.4,1.0]$ \\
 LSTM hidden size ($d_{Z}$) &  $128$ \\
 \bottomrule
\end{tabular}
\caption{Hyperparameter values.}
\label{hyp}
\end{table}

In all MiniGrid environments, the agent receives $7 \times 7 \times 3$ sized observation vectors. In order to compress the observations into a more compact representation, we are using pretrained encoders. These are simple auto-encoders \cite{Kramer1991NonlinearPC} that are trained on datasets of random agent observations. The encoders are trained for 100 epochs on this dataset. We use MLP layers with ReLU nonlinearities \cite{Fukushima1975} for the encoder and the decoder architecture. The encoder weights are frozen during the RL phase. The encoder and decoder architectures are as follows:

$$
\begin{array}{cccc}
\hline \text{Encoder} & \text{Decoder} \\
\hline \operatorname{Linear}\left(147 , 96\right) & 
\operatorname{Linear}\left(64 , 96\right) \\
\operatorname{ReLU} & \operatorname{ReLU} \\
\operatorname{Linear}\left(96 , 64\right) & \operatorname{Linear}\left(96 , 147\right) \\
&  \operatorname{Tanh} \\
\hline
\end{array}
$$

RQL-AIS has the following four components: the recurrent history compression function $\hat{\sigma}$, the reward prediction function $\hat{r}$, the observation prediction function $\hat{P}^y$ and the action-value function $\hat{Q}_{\theta}$. During our experimentation, we use Exponential Linear Units (ELUs) \cite{https://doi.org/10.48550/arxiv.1511.07289} for the nonlinear layers. For the recurrent component, we are using an LSTM \cite{article} function. Linear layers with input sizes $n$ and output sizes $m$ are denoted by Linear$(n,m)$ and an LSTM cell with input of size $n$ and a hidden vector size of $m$ is denoted by LSTM$(n,m)$. $n_O$ and $n_A$ denote the observation vector size and the action space size. The neural network architectures of the four components are:

$$
\begin{array}{cccc}
\hline \hat{\sigma} & \hat{r} & \hat{Q}_{\theta} & \hat{P}^y \\
\hline \operatorname{Linear}\left(n_O+n_A, d_{\hat{Z}}\right) & \operatorname{Linear}\left(n_A+d_{\hat{Z}}, \frac{1}{2} d_{\hat{Z}}\right) & \operatorname{Linear}\left(d_{\hat{Z}}, d_{\hat{Z}}\right) &
\operatorname{Linear}\left(n_A+d_{\hat{Z}}, \frac{1}{2} d_{\hat{Z}}\right) \\
\operatorname{ELU} & \operatorname{ELU} & \operatorname{ELU} & \operatorname{ELU} \\
\operatorname{LSTM}\left(d_{\hat{Z}}, d_{\hat{Z}}\right) & \operatorname{Linear}\left(\frac{1}{2} d_{\hat{Z}}, 1\right) & \operatorname{Linear}\left(d_{\hat{Z}}, d_{\hat{Z}}\right) &
\operatorname{Linear}\left(\frac{1}{2} d_{\hat{Z}}, n_O\right) \\
& & \operatorname{ELU} \\
& & \operatorname{Linear}\left(d_{\hat{Z}},n_A\right) \\
\hline
\end{array}
$$

For the experiments with ND-R2D2 we have divided the architecture into two components: the recurrent unit $\hat{\sigma}$ and the Q-function $\hat{Q}_{\theta}$. This is done for simplicity of implementation and consistency with the RQL-AIS notation but both are trained with the Q-learning loss. The Q-learning variant which does not use the pretrained autoencoders has an extra layer in its recurrent unit to allow for extra processing of the higher-dimensional inputs.

$$
\begin{array}{cccc}
\hline \hat{\sigma} (AE) & \hat{\sigma} (\text{raw}) & \hat{Q}_{\theta} \\
\hline \operatorname{Linear}\left(n_O+n_A, d_{\hat{Z}}\right) & 
\operatorname{Linear}\left(n_O+n_A, d_{\hat{Z}}\right) &
\operatorname{Linear}\left(d_{\hat{Z}}, d_{\hat{Z}}\right) \\
\operatorname{ELU} & \operatorname{ELU} & \operatorname{ELU} \\
\operatorname{LSTM}\left(d_{\hat{Z}}, d_{\hat{Z}}\right) & \operatorname{Linear}\left(d_{\hat{Z}}, d_{\hat{Z}}\right) & \operatorname{Linear}\left(d_{\hat{Z}}, d_{\hat{Z}}\right) \\
&  \operatorname{ELU} & \operatorname{ELU} \\
&  \operatorname{LSTM}\left(d_{\hat{Z}}, d_{\hat{Z}}\right) & \operatorname{Linear}\left(d_{\hat{Z}},n_A\right)  \\
\hline
\end{array}
$$

\section{Limitations}

In this work, we have conducted an extensive theoretical analysis into the convergence of recurrent Q-learning. Our analysis is restricted to a setting where the true state space and the recurrent state space are both finite and therefore the tabular setup can be used to represent the Q-function. In practice, the state space is either continuous or large so that some form of function approximation is needed and the recurrent state is continuous. 

In addition, we assume that the AIS-generator functions including the recurrent history compression function are fixed throughout the Q-learning iterations. In a practical algorithm such as \texttt{RQL-AIS}, the representation is learnt in parallel with the Q-function.

Our results are derived under a slightly strong assumption on the learning rates \textbf{(A3)}, while most convergence results (for MDPs) use the weaker assumption \textbf{(A3')}. 

In our empirical evaluation, we train the representation using an observation prediction as a proxy for the AIS-predictor. It is shown in \cite[Proposition 8]{subramanian2022approximate} that having a good observation predictor along with a recursively updateable AIS is a sufficient condition for having a good AIS-predictor. However, predicting all the observations might not be necessary and using an AIS-predictor may, in principle, need a smaller representation.

\end{document}